\theoremstyle{plain}
\newtheorem{thm}{\protect\theoremname}
\theoremstyle{plain}
\newtheorem{lem}[thm]{\protect\lemmaname}
\theoremstyle{remark}
\theoremstyle{plain}
\theoremstyle{plain}
\theoremstyle{definition}
\theoremstyle{plain}
\newtheorem{prop}{\protect\propositionname}
\providecommand{\corollaryname}{Corollary}
\providecommand{\lemmaname}{Lemma}
\providecommand{\remarkname}{Remark}
\providecommand{\theoremname}{Theorem}
\providecommand{\conjecturename}{Conjecture}
\providecommand{\definitionname}{Definition}
\providecommand{\propositionname}{Proposition}
\newcommand{\argmax}{\text{argmax}}
\newcommand{\R}{\mathbb{R}}
\newcommand{\E}{\mathbb{E}}
\newcommand{\Var}{\mathbb{V}}
\newcommand{\Phat}{\widehat{P}}
\newcommand{\Vhat}{\widehat{V}}
\newcommand{\Qhat}{\widehat{Q}}
\newcommand{\pistar}{\pi^\star}
\newcommand{\pihstar}{\widehat{\pi}^\star}
\newcommand{\one}{\mathbf{1}}
\newcommand{\pert}{\mathrm{p}}
\newcommand{\rpert}{\widetilde{r}}
\newcommand{\Rpert}{\widetilde{R}}
\newcommand{\ind}{\mathbb{I}}
\newcommand{\A}{\mathcal{A}}
\renewcommand{\S}{\mathcal{S}}
\newcommand{\gammared}{{\overline{\gamma}}}
\newcommand{\Otilde}{\widetilde{O}}
\newcommand{\tmix}{\tau_{\mathrm{unif}}}
\newcommand{\tstar}{\tau^\star}
\DeclareMathOperator*{\Clim}{\text{C-lim}}
\global\long\def\infnorm#1{\left\Vert #1\right\Vert _{\infty}}%
\global\long\def\infinfnorm#1{\left\Vert #1\right\Vert _{\infty \to \infty}}%
\global\long\def\onenorm#1{\left\Vert #1\right\Vert _{1}}%
\global\long\def\spannorm#1{\left\Vert #1\right\Vert _{\textnormal{span}}}%
\title{Span-Based Optimal Sample Complexity for Average Reward MDPs}
\author{Matthew Zurek}
\author{Yudong Chen}
\affil{Department of Computer Sciences, University of Wisconsin-Madison\\\texttt{\{matthew.zurek,yudong.chen\}@wisc.edu}}
\date{}
\begin{document}
\maketitle

\begin{abstract}
    We study the sample complexity of learning an $\varepsilon$-optimal policy in an average-reward Markov decision process (MDP) under a generative model. We establish the complexity bound $\widetilde{O}\left(SA\frac{H}{\varepsilon^2} \right)$, where $H$ is the span of the bias function of the optimal policy and $SA$ is the cardinality of the state-action space. Our result is the first that is minimax optimal (up to log factors) in all parameters $S,A,H$ and $\varepsilon$, improving on existing work that either assumes uniformly bounded mixing times for all policies or has suboptimal dependence on the parameters.
    
    Our result is based on reducing the average-reward MDP to a discounted MDP. To establish the optimality of this reduction, we develop improved bounds for $\gamma$-discounted MDPs, showing that $\Otilde\left(SA\frac{H}{(1-\gamma)^2\varepsilon^2} \right)$ samples suffice to learn a $\varepsilon$-optimal policy in weakly communicating MDPs under the regime that $\gamma \geq 1 - \frac{1}{H}$, circumventing the well-known lower bound of $\widetilde{\Omega}\left(SA\frac{1}{(1-\gamma)^3\varepsilon^2} \right)$ for general $\gamma$-discounted MDPs. Our analysis develops upper bounds on certain instance-dependent variance parameters in terms of the span parameter. These bounds are tighter than those based on the mixing time or diameter of the MDP and may be of broader use.
\end{abstract}

\section{Introduction}

The paradigm of Reinforcement learning (RL) has recently received much attention and demonstrated remarkable successes in various sequential learning and decision-making problems. Empirical successes have motivated extensive theoretical study of RL algorithms and their fundamental limits.
The RL environment is commonly modeled as a Markov decision process (MDP), where the objective is to find a policy $\pi$ that maximizes the expected cumulative rewards. Different reward criteria have been considered, such as the finite horizon total reward $\E^\pi\left[ \sum_{t=0}^T R_t\right]$ and the  infinite horizon discounted reward criterion $\E^\pi\left[ \sum_{t=0}^\infty \gamma^t R_t\right]$ with a discount factor $\gamma < 1$. The finite horizon criterion only measures performance for $T$ steps, and the discounted criterion is dominated by the rewards received in the first $\frac{1}{1-\gamma}$ time steps. In many practical situations where the long-term performance of the policy $\pi$ is of interest, we may prefer to evaluate policies in terms of their long run average reward $\lim_{T \to \infty}\E^\pi\left[ \frac{1}{T}\sum_{t=0}^{T-1} R_t\right]$.

A foundational theoretical problem in RL is the sample complexity for learning a near-optimal policy when we have access to a generative model of the MDP \citep{kearns_finite-sample_1998}, meaning the ability to obtain independent samples of the next state given any initial state and action. For the finite horizon and discounted reward criteria, the sample complexity of this problem has been thoroughly studied and well understood (e.g., \cite{azar_sample_2012, gheshlaghi_azar_minimax_2013, sidford_near-optimal_2018, wainwright_variance-reduced_2019, agarwal_model-based_2020, li_breaking_2020}). However, despite significant effort (reviewed in Section~\ref{sec:related}), the sample complexity of  the average reward setting is unresolved in existing literature.

In this paper, we resolve the sample complexity of Average Reward MDPs in terms of $H := \spannorm{h^\star}$,
the span of the bias (a.k.a.\ relative value function) of the optimal policy. We show that
\[
\Otilde\left(SA\frac{H}{\varepsilon^2}\right)
\]
samples suffice to find an $\varepsilon$-optimal policy of an MDP with $S$ states and $A$ actions, under the assumption that the MDP is weakly communicating. This bound, presented formally in Section \ref{sec:results} as Theorem \ref{thm:main_theorem}, is the first that matches the minimax lower bound $\widetilde{\Omega}\left( SAH/\varepsilon^2\right)$ up to log factors. 

To establish the above result, we adopt the the reduction-to-discounted-MDP approach pioneered by \cite{jin_towards_2021} and \cite{wang_near_2022}, and develop an improved bound for the complexity of certain discounted MDPs. This result for discounted MDPs, which appears as our Theorem \ref{thm:DMDP_bound}, shows that
\[
\Otilde\left(SA\frac{H}{(1-\gamma)^2\varepsilon^2}\right)
\]
samples are sufficient for finding an $\varepsilon$-optimal policy in the discounted setting, under the mild additional assumptions of a weakly communicating MDP and a sufficiently large effective horizon so that $\gamma \geq 1 - \frac{1}{H}$. In this setting, the above result circumvents the well-known minimax lower bound of $\widetilde{\Omega}\left(SA\frac{1}{(1-\gamma)^3\varepsilon^2} \right)$ for the sample complexity of discounted MDPs. 

Note that the span $H$ is always upper bounded by the diameter $D$ of the MDP, by the mixing time $\tstar$ of any optimal policy, and in turn by a uniform upper bound $\tmix$ on the mixing times of all policies; see Section~\ref{sec:formulation} for formal definitions. Consequently,  our two results above immediately imply sample complexity bounds in terms of  $D$, $\tstar$ and $\tmix$, though our span-based bounds are often substantially stronger. In particular, our results do \emph{not} require uniform mixing (i.e., $\tmix <\infty$) or strongly communicating (i.e., $D<\infty$), which are common but restrictive assumptions used in the average-reward MDP literature. 

To prove the above results, we make use of the algorithm from \cite{li_breaking_2020} for discounted MDPs, but provide a specialized analysis using the span parameter $H$.
By improving upon the analysis used in previous reductions from average-reward to discounted MDPs, we believe our work sheds greater light on the relationship between these two problems. Our approach is discussed further in Subsection \ref{sec:our_approach}, and a more detailed proof sketch is provided in Section \ref{sec:proof_sketch}.

\subsection{Related Work}
\label{sec:related}

Below we first discuss related work and compare to our result.

\subsubsection*{Average-reward MDPs} 

We summarize in Table \ref{table:AMDPs} the relevant literature on the sample complexity of average reward MDPs under a generative model. In all references, the objective is to find an $\varepsilon$-optimal policy $\pihstar$.

\begin{table}[H]
{\renewcommand{\arraystretch}{1.4} 
\centering
\begin{tabular}{|c|c|c|c|}
\hline
Method & Sample Complexity & Reference & Comments \\ \hline \hline
Primal-Dual SMD & $\Otilde\left( SA \frac{\tmix^2}{\varepsilon^2}\right)$ & \cite{jin_efficiently_2020} & requires uniform mixing \\ \hline
Reduction to Discounted MDP & $\Otilde\left( SA \frac{\tmix}{\varepsilon^3}\right)$ & \cite{jin_towards_2021} & requires uniform mixing \\ \hline
Policy Mirror Descent & $\Otilde\left( SA \frac{\tmix^3}{\varepsilon^2}\right)$ & \cite{li_stochastic_2022} & requires uniform mixing \\ \hline
Reduction to Discounted MDP & $\Otilde\left(SA \frac{\tmix}{\varepsilon^2} \right)$ & \cite{wang_optimal_2023-1} & requires uniform mixing \\ \hline \hline 
Reduction to Discounted MDP & $\Otilde\left(SA\frac{H}{\varepsilon^3} \right)$ & \cite{wang_near_2022} &  \\ \hline 
Refined Q-Learning & $\Otilde\left(SA\frac{H^2}{\varepsilon^2} \right)$ & \cite{zhang_sharper_2023} &  \\ \hline 
Reduction to Discounted MDP & $\Otilde\left(SA\frac{H}{\varepsilon^2} \right)$ & Our Theorem \ref{thm:main_theorem} &  \\ \hline \hline
Lower Bound & $\widetilde{\Omega}\left( SA\frac{\tmix}{\varepsilon^2}\right)$ & \cite{jin_towards_2021} & implies $\widetilde{\Omega}\left( SA\frac{H}{\varepsilon^2}\right)$\\ \hline
Lower Bound & $\widetilde{\Omega}\left( SA\frac{D}{\varepsilon^2}\right)$ & \cite{wang_near_2022} & implies $\widetilde{\Omega}\left( SA\frac{H}{\varepsilon^2}\right)$ \\ \hline
\end{tabular}
\caption{\textbf{Comparison of algorithms and sample complexity bounds} for finding an $\varepsilon$-optimal policy of average reward MDPs with $S$ states and $A$ actions. Here $H:= \spannorm{h^\star}$ is the span of the bias of an optimal policy,  $\tmix$ is a uniform upper bound on the mixing times of all policies, and $D$ is the diameter of the MDP, with the relationships $H\le 8\tmix$ and $H\le D.$}
\label{table:AMDPs}
}
\end{table}

Various parameters have been used to characterize the sample complexity of average reward MDPs, including the diameter $D$ of the MDP, the uniform mixing time bound $\tmix$ for all policies, and the span $H$ of the optimal bias; formal definitions and more comparisons are provided in Section \ref{sec:formulation}. All sample complexity upper bounds involving $\tmix$ require the strong assumption that \emph{all} stationary policies have finite mixing times. Otherwise, we have $\tmix=\infty$ by definition, which occurs when, for example, there exists any policy which induces a periodic Markov chain. The situation $D = \infty$ is also possible, while $H$ is always finite in weakly communicating MDPs with finite state-action spaces. As shown in \cite{wang_near_2022}, there is generally no relationship between $D$ and $\tmix$; they can each be arbitrarily larger than the other. On the other hand, it has been shown that $H \leq D$ \citep{bartlett_regal_2012} and that $H \leq 8 \tmix$ \citep{wang_near_2022}, so either minimax lower bound in Table \ref{table:AMDPs} implies a lower bound $\widetilde{\Omega}\left( SA\frac{H}{\varepsilon^2}\right)$ and thus the minimax optimality of our Theorem \ref{thm:main_theorem}.

The work \cite{jin_towards_2021} was the first to develop an algorithm based on reduction to a discounted MDP with a discount factor of $\gamma = 1 - \frac{\varepsilon}{\tmix}$. Their argument was improved in \cite{wang_near_2022}, which removed the uniform mixing assumption and used a smaller discount factor $\gamma = 1 - \frac{\varepsilon}{H}$. After analyzing the reductions, both \cite{jin_towards_2021} and \cite{wang_near_2022} then solved the discounted MDPs by appealing to the algorithm from \cite{li_breaking_2020}. To the best of our knowledge, the algorithm of \cite{li_breaking_2020} is the only known algorithm for discounted MDPs which could work with either reduction, as the reductions each require a $\frac{\varepsilon}{1-\gamma}$-optimal policy from the discounted MDP, and other known algorithms for discounted MDPs do not permit such large suboptimality levels. (We discuss algorithms for discounted MDPs in more detail below.) Other algorithms for average-reward MDPs are considered in \cite{jin_towards_2021,li_stochastic_2022,zhang_sharper_2023}. The above results fall short of matching the minimax lower bounds.

While preparing this manuscript, we became aware of the independent work \cite{wang_optimal_2023-1}, who considered the uniform mixing setting and obtained a minimax optimal sample complexity $\Otilde\left(SA \frac{\tmix}{\varepsilon^2} \right)$ in terms of $\tmix$. Although developed independently, their work and ours have several similarities. We both observe that it is possible to improve the variance analysis of the algorithm from \cite{li_breaking_2020} to get a superior complexity for discounted MDPs in regimes relevant to average-to-discounted reduction, which involves a large value of $\gamma$ or equivalently a large effective horizon $1/(1-\gamma)$. They accomplish the improvement by leveraging the uniform mixing assumption, whereas we make use of the low span of the optimal policy. Note that $H \leq 8 \tmix$ holds in general and there exist MDPs with $H \ll \tmix=\infty$, so our Theorem \ref{thm:main_theorem} is strictly stronger than  the result of \cite{wang_optimal_2023-1}.

\subsubsection*{Discounted MDPs} 

We discuss a subset of results for discounted MDPs in the generative setting. Several works \cite{sidford_near-optimal_2018, wainwright_variance-reduced_2019, agarwal_model-based_2020, li_breaking_2020} obtain the minimax optimal sample complexity of $\Otilde\left( SA\frac{1}{(1-\gamma)^3\varepsilon^2}\right)$, but only \cite{li_breaking_2020} are able to show this bound for the full range of $\varepsilon \in (0, \frac{1}{1-\gamma}]$. (Note that now the goal is to find a $\varepsilon$-optimal policy in the discounted MDP, so $\varepsilon$ has a different meaning than in the average-reward setting; more detail is given in Section \ref{sec:formulation}.) As mentioned above, the reduction from average reward to discounted MDPs requires a large $\varepsilon$, making it unsurprising that all of \cite{jin_towards_2021, wang_near_2022, wang_optimal_2023-1} as well as our Algorithm \ref{alg:DMDP_alg} essentially use their algorithm. The matching lower bound is established in \cite{sidford_near-optimal_2018} making use of the techniques from \cite{gheshlaghi_azar_minimax_2013}.

As mentioned earlier, both we and the authors of \cite{wang_optimal_2023-1, wang_optimal_2023} independently observed that the $\widetilde{\Omega}\left( SA\frac{1}{(1-\gamma)^3\varepsilon^2}\right)$ sample complexity lower bound can be circumvented in the settings that arise under the average-to-discounted reductions considered \cite{jin_towards_2021} and \cite{wang_near_2022}. The authors of \cite{wang_optimal_2023-1, wang_optimal_2023} assume uniform mixing and $\frac{1}{1-\gamma} \geq \tmix$ and obtain a discounted MDP sample complexity of $\Otilde\left(SA \frac{\tmix}{(1-\gamma)^2 \varepsilon^2} \right)$, first in \cite{wang_optimal_2023} by modifying the algorithm of \cite{wainwright_variance-reduced_2019}, and then in \cite{wang_optimal_2023-1} under a wider range of $\varepsilon$ by instead modifying the analysis of \cite{li_breaking_2020}. \cite{wang_optimal_2023} also proves a matching lower bound. Our Theorem \ref{thm:DMDP_bound} for discounted MDPs attains a sample complexity of $\Otilde\left(SA \frac{H}{(1-\gamma)^2 \varepsilon^2} \right)$ under the requirement $\frac{1}{1-\gamma} \geq H$, assuming only that the MDP is weakly communicating. Again, in light of the relationship that $H \leq 8 \tmix$, our sample complexity as well as our condition on $\gamma$ dominate their results (ignoring constants), and their lower bound also establishes the optimality of our Theorem \ref{thm:DMDP_bound}.

\subsection{Our Approach}
\label{sec:our_approach}

Now we give a high-level description of our techniques. We apply a simple reduction-based approach to solving the average-reward MDP, reducing this problem to solving a discounted MDP with the same reward and transition kernel and a judicious choice of discount factor $\gamma$. This method was first developed by \cite{jin_towards_2021}, who assumed bounded mixing times for all policies, and later improved by \cite{wang_near_2022} to only require a bound on $\spannorm{h^\star}$. We record their result as Lemma \ref{lem:DMDP_reduction}.

By using the reduction from \cite{wang_near_2022}, to obtain an $O(\varepsilon)$-optimal policy for the average-reward MDP, it suffices to obtain an $H$-optimal policy for the discounted MDP with discount factor $\gamma = 1-\frac{\varepsilon}{H}$. If we had an algorithm for discounted MDPs which required $\Otilde\left(SA \frac{H}{(1-\gamma)^2\varepsilon^2} \right)$ samples to obtain $\varepsilon$-accuracy, then setting $\varepsilon=H$ and $\gamma = 1-\frac{\varepsilon}{H}$would yield an optimal sample complexity of
\[\Otilde\left( SA\frac{H}{(1-\gamma)^2\varepsilon^2}\right)=\Otilde\left( SA\frac{H}{\left(\frac{\varepsilon}{H}\right)^2 H^2}\right) = \Otilde\left( SA\frac{H}{\varepsilon^2}\right)\]
for the average-reward MDP.
Note that \cite{wang_near_2022} show a similar calculation, but they use a discounted MDP oracle complexity of $\Otilde\left(SA \frac{1}{(1-\gamma)^3\varepsilon^2} \right)$, leading to an additional factor of $\frac{1}{\varepsilon}$.

As a first approximation, our key insight to improving the sample complexity for discounted MDPs is the observation that the $\frac{1}{(1-\gamma)^3}$ factor arises from an upper bound on a certain instance-dependent variance parameter which, in the setting of weakly-communicating MDPs with $\spannorm{h^\star} \leq H$ and with large discount factors ($\frac{1}{1-\gamma} \geq H$), can actually bounded by $\frac{H}{(1-\gamma)^2}$. Here we highlight some essential ideas for proving this improved variance bound and defer a more detailed description to the proof sketch in Section \ref{sec:proof_sketch}. To relate the variance parameters to $\spannorm{h^\star}$, the key ingredient is a multistep version of the variance Bellman equation (\cite[Theorem 1]{sobel_variance_1982}) with a carefully balanced number of steps. In addition, the variance parameters must be controlled both for the true optimal policy as well as the policy $\pihstar$ that is optimal for an empirical MDP constructed from samples. This is accomplished by using the multistep variance Bellman relation to bound the variance in terms of the suboptimality of $\pihstar$, yielding a recursive bound on the suboptimality of $\pihstar$.

\section{Problem Setup}
\label{sec:formulation}

A Markov decision process (MDP) is given by a tuple $(\S, \A, P, r)$, where $\S$ is the finite set of states, $\A$ is the finite set of actions, $P : \S \times \A \to \Delta(\S)$ is the transition kernel with $\Delta(\S)$ denoting the probability simplex over $\S$, and $r : \S \times \A \to [0,1]$ is the reward function. Let $S := |\S|$ and $A := |\A|$ denote the cardinality of the state and action spaces, respectively. In this paper, we only consider stationary Markovian policies of the form $\pi : \S \to \Delta(\A)$. For any initial state $s_0 \in \S$ and policy $\pi$, we let $\E^\pi_{s_0}$ denote the expectation with respect to the probability distribution over trajectories $(S_0, A_0, S_1, A_1, \dots)$ where $S_0 = s_0$, $A_t \sim \pi(S_t)$, and $S_{t+1} \sim P(\cdot \mid S_t, A_t)$. Equivalently, this is the expectation with respect to the Markov chain induced by $\pi$ starting in state $s_0$,  with the transition probability matrix $P_\pi$ given by  $\left(P_\pi\right)_{s,s'} := \sum_{a \in \A} \pi(a \mid s) P(s' \mid s, a)$. We also define $(r_\pi)_{s} := \sum_{a \in \A} \pi(a \mid s) r(s, a)$. For any $s \in \S$ and any bounded function $X$ of the trajectory, we define the variance $\Var^\pi_s\left[X\right] := \E^\pi_s \left(X - \E^\pi_s\left[X \right] \right)^2$, with its vector version $\Var^\pi\left[X\right] \in \R^\S$ given by  $\left(\Var^\pi\left[X\right] \right)_s = \Var^\pi_s\left[X\right]$.
For $s \in \S$, let $e_s \in \R^{\S}$ be the vector that is all $0$ except for a $1$ in entry $s$. Let $\one \in \R^{\S}$ be the all-one vector. For each $v \in \R^{\S} $, define the span semi-norm  \[\spannorm{v} := \max_{s \in \S} v(s) - \min_{s \in \S} v(s).\]

Throughout this paper We assume the MDP is weakly-communicating \cite{puterman_markov_2014}, meaning that the states can be partitioned into two subsets such that in the first subset, all states are transient under any stationary policy, and in the second subset, any state is reachable from any other state under some stationary policy.

A discounted MDP is a tuple $(\S, \A, P, r, \gamma)$, where $\gamma \in (0,1)$ is the discount factor. For any stationary policy $\pi$, the (discounted) value function $V^\pi_\gamma : \S \to [0, \infty)$ is defined, for each $s \in \S$, as
\begin{align}
    V^\pi_\gamma(s) := \E^\pi_s \left[\sum_{t=0}^\infty \gamma^t R_t \right] \label{eq:value_fn_defn}
\end{align}
where $R_t = r(S_t, A_t)$ is the reward received at time $t$. It is well-known that there exists an optimal policy $\pistar_\gamma$ that is deterministic and satisfies $V_\gamma^{\pistar_\gamma}(s) = V_\gamma^\star(s) := \sup_{\pi} V_\gamma^\pi(s)$ for all $s \in \S$ \cite{puterman_markov_2014}. In discounted MDPs the goal is to compute an $\varepsilon$-optimal policy,  which we define as a policy $\pi$ satisfying $\infnorm{V_\gamma^\pi - V_\gamma^\star} \leq \varepsilon$. We define one more variance parameter $\Var_{P_{\pi}}\left[V_\gamma^{\pi} \right] \in \R^{\S}$, specific to a policy $\pi$, by
\[
\left(\Var_{P_{\pi}} \left[V_\gamma^{\pi} \right]\right)_s := \sum_{s' \in \S} \left(P_{\pi}\right)_{s, s'} \left(V_\gamma^{\pi}(s') - \sum_{s''}\left(P_{\pi}\right)_{s, s''}V_\gamma^{\pi}(s'')\right)^2.
\]

In an MDP $(\S, \A, P, r)$ under the average-reward criterion, the average reward per stage or the \emph{gain} of a policy $\pi$ starting from state $s$ is defined as
\begin{align*}
    \rho^\pi(s)  := \lim_{T \to \infty} \frac{1}{T} \E_s^\pi \left[\sum_{t=0}^{T-1} R_t \right].
\end{align*}
When the MDP is weakly communicating, there exist $q^\star : \S \times \A \to \R$ and $\rho^\star \in \R$ that satisfy the Bellman/Poisson equation
\begin{align*}
    \rho^\star + q^\star(s,a) = r(s,a) + \sum_{s' \in \S} P(s' \mid s, a) \max_{a' \in \A} q^\star(s',a'), \quad \forall s, a \in \S \times \A.
\end{align*}
Furthermore, $\rho^\star$ is unique and $q^\star$ is unique up to additive shifts by $\alpha \one$ for $\alpha \in \R$. The optimal \emph{bias} function $h^\star:\S \to \R$ is defined by $h^\star(s) := \max_{a' \in \A} q^\star(s,a')$. Note that $h^\star$ is only unique up to additive shifts, but the span $H:=\spannorm{h^\star}$ is uniquely defined.
Defining the policy $\pistar(s) := \argmax_{a' \in \A} q^\star(s, a'),\forall s\in\S$, we have that $\rho^{\pistar}(s) \geq \rho^\pi$ for any stationary policy $\pi$. Also $\rho^{\pistar}(s) = \rho^\star$ for all $s$, so we slightly abuse notation and define $\rho^\star = \rho^{\pistar}$ (a vector). While we only need to consider the bias of the optimal policy $\pistar$ in this paper, we record the following standard definition for the bias function of any stationary policy $\pi$:
\begin{align*}
    h^\pi(s) := \Clim_{T \to \infty} \E_s^\pi \left[\sum_{t=0}^{T-1} R_t - T\rho^\pi(s) \right],
\end{align*}
where $\Clim$ denotes the Cesaro limit. Under this definition, we have $h^{\pistar} = h^\star$ (where the equality is modulo $+\alpha\one$). When the Markov chain induced by $P_\pi$ is aperiodic, $\Clim$ can be replaced with the usual limit. In the average reward setting, our goal is find an $\varepsilon$-optimal policy, defined as a policy $\pi$ such that $\infnorm{\rho^\star - \rho^\pi} \leq \varepsilon$.

We now define several other parameters that have been used to characterize sample complexity in average reward MDPs. For each $s\in \S$, let $\eta_{s}$ denote the hitting time of $s$. The diameter $D$ is defined as
\[
D := \max_{s_1 \neq s_2} \inf_{\pi\in \Pi} \E^\pi_{s_1} \left[\eta_{s_2}\right],
\]
where $\Pi$ is the set of all deterministic stationary policies. 
For each stationary policy $\pi$, if the Markov chain induced by $P_\pi$ has a unique stationary distribution $\nu_\pi$, we define the mixing time of $\pi$ as
\begin{align*}
    \tau_\pi := \inf \left\{t \geq 1 : \max_{s \in \S} \onenorm{e_s^\top \left(P_\pi \right)^t - \nu^\top_\pi} \leq \frac{1}{2} \right\}.
\end{align*}
If all policies $\pi\in\Pi$ satisfy this assumption, we define the uniform mixing time $\tmix := \sup_{\pi\in\Pi} \tau_\pi$.
Let $\Pi^\star$ denote the set of all deterministic policies $\pi$ such that $\pi(s) \in \argmax_a q^\star(s,a)$ and that $\pi$ has a unique stationary distribution. We can then define the optimal mixing time
\begin{align*}
    \tstar := \inf_{\pi \in \Pi^\star} \tau_\pi.
\end{align*}
By definition we trivially have $\tstar \leq \tmix$. Note that $D$ and $\tmix$ are generally incomparable \cite{wang_near_2022}, while we always have $H \leq D$ \cite{bartlett_regal_2012} and $H \leq 8 \tmix$ \cite{wang_near_2022}. In fact, we also have $H \leq 8\tstar$, as shown in Proposition \ref{thm:optimal_tmix_bound}. It is possible for $\tmix = \infty$, for instance if there are any policies which induce periodic Markov chains. Also, if there are any states which are transient under all policies, then $D = \infty$. However, $H$ is finite in any weakly communicating MDP with $S, A < \infty$.

We assume access to a generative model \cite{kearns_finite-sample_1998}, also known as a simulator. This means we can obtain independent samples from $P(\cdot \mid s, a)$ for any given $s \in \S, a \in \A$, but $P$ itself is unknown. We assume that the reward function $r$ is deterministic and known, which is a standard assumption in generative settings (e.g., \cite{agarwal_model-based_2020, li_breaking_2020}) since otherwise estimating the mean rewards is relatively easy. Specifically, to learn an $\varepsilon$-optimal policy for the discounted MDP in our setting, we would need to estimate each entry of $r$ to accuracy $O((1-\gamma)\varepsilon)$, which requires a lower order number of samples $\Otilde\left(\frac{SA}{(1-\gamma)^2\varepsilon^2}\right)$. For this reason we assume (as in \cite{wang_near_2022}) that $H \geq 1$. 

Using samples from the generative model, our Algorithm \ref{alg:DMDP_alg} constructs an empirical transition kernel $\Phat$. For a policy $\pi$ and state $s$, we use $\Vhat^\pi_\gamma(s)$ to denote a value function computed with respect to the Markov chain with transition matrix $\Phat_\pi$ (as opposed to $P_\pi$). Our Algorithm \ref{alg:DMDP_alg} also utilizes a perturbed reward function $\rpert$, and we use the notation $V^\pi_{\gamma, \pert}(s)$ to denote a value function computed using this reward (and $P_\pi$); more concretely, we replace $R_t$ with $\Rpert_t = \rpert(S_t, A_t)$ in equation~\eqref{eq:value_fn_defn}. We use the notation $\Vhat^\pi_{\gamma, \pert}$ when using $\Phat$ and $\rpert$ simultaneously. Since we only ever consider the state space $\S$ and action space $\A$,  we often omit reference to them when describing MDP parameters and simply write $(P,r)$ or $(P,r,\gamma)$.

\section{Main Results}
\label{sec:results}

Our approach is based on reducing the average-reward problem to a discounted problem. We first present our algorithm and guarantees for the discounted MDP setting.

\begin{algorithm}[H]
\caption{Perturbed Empirical Model-Based Planning} \label{alg:DMDP_alg}
\begin{algorithmic}[1]
\Require Sample size per state-action pair $n$, target accuracy $\varepsilon \in (0, H]$, discount factor $\gamma$
\For{each state-action pair $(s,a) \in \S \times \A$}
\State Collect $n$ samples $S^1_{s,a}, \dots, S^n_{s,a}$ from $P(\cdot \mid s,a)$
\State Form the empirical transition kernel $\Phat(s' \mid s, a) = \frac{1}{n}\sum_{i=1}^n \ind\{S^i_{s,a} = s'\}$, for all $s' \in \S$
\EndFor
\State Set perturbation level $\xi = \frac{(1-\gamma)\varepsilon}{6}$
\State Form perturbed reward $\rpert = r + Z$ where $Z(s,a) \stackrel{\text{i.i.d.}}{\sim} \text{Unif}(0, \xi)$
\State Compute a policy $\pihstar_{\gamma, \pert}$ which is optimal for the perturbed empirical discounted MDP $(\Phat, \rpert, \gamma)$
\State \Return $\pihstar_{\gamma, \pert}$
\end{algorithmic}
\end{algorithm}

As discussed in Subsection \ref{sec:related}, our algorithm of choice, Algorithm~\ref{alg:DMDP_alg}, is essentially the same as the one presented in \cite{li_breaking_2020}, with a slightly different perturbation level $\xi$. In Algorithm \ref{alg:DMDP_alg} we construct an empirical transition kernel $\Phat$ using samples from the generative model, and then solve the resulting empirical (perturbed) MDP $(\Phat, \rpert, \gamma)$, for which any MDP solver can be used. We remark in passing that $SA$-by-$S$ transition matrix $\Phat$ has at most $nSA$ nonzero entries.

Our Theorem \ref{thm:DMDP_bound} provides an improved sample complexity bound for Algorithm~\ref{alg:DMDP_alg} under the setting that the MDP is weakly communicating and the span satisfies $H \leq \frac{1}{1-\gamma}$.

\begin{thm}[Sample Complexity of Discounted MDP]
    \label{thm:DMDP_bound}
    Suppose $H \leq \frac{1}{1-\gamma}$ and $\varepsilon \leq H$. There exists a constant $C_2 > 0$ such that, for any $\delta \in (0,1)$, if $n \geq C_2\frac{H}{(1-\gamma)^2\varepsilon^2} \log \left( \frac{S A}{(1-\gamma)\delta \varepsilon}\right)$, then with probability at least $1-\delta$, the policy $\pihstar_{\gamma, \pert}$ output by Algorithm \ref{alg:DMDP_alg} satisfies
    \begin{align*}
        \infnorm{V^\star - V^{\pihstar_{\gamma, \pert}}} \leq \varepsilon.
    \end{align*}
\end{thm}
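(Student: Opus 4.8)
The plan is to run the model-based analysis of \cite{li_breaking_2020} for Algorithm~\ref{alg:DMDP_alg}, whose standard form yields a sample complexity of order $\frac{1}{(1-\gamma)^3\varepsilon^2}$ only because it controls a certain instance-dependent variance parameter by its worst-case value $\frac{1}{(1-\gamma)^2}$. My goal is to replace this by a bound of order $\frac{H}{1-\gamma}$, valid under weak communication and $H \le \frac{1}{1-\gamma}$. Concretely, writing $\Delta := \infnorm{\Vstar_\gamma - V^{\pihstar_{\gamma,\pert}}_\gamma}$, the leave-one-out decoupling together with Bernstein's inequality reduces the task to a bound of the shape $\Delta \lesssim \frac{1}{1-\gamma}\sqrt{\frac{(1-\gamma)\,\sigma_{\max}\log(\cdots)}{n}} + (\text{lower order})$, where $\sigma_{\max} := \max_{\pi \in \{\pistar_\gamma, \pihstar\}} \infnorm{\sigma^\pi_\gamma}$ and $\sigma^\pi_\gamma(s) := \Var^\pi_s[\sum_{t\ge0}\gamma^t R_t]$ is the variance of the discounted return. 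Thus it suffices to show $\sigma_{\max} \lesssim \frac{H}{1-\gamma}$ and then solve the resulting inequality for $\Delta$; indeed $\sigma_{\max}\lesssim\frac{H}{1-\gamma}$ makes the leading term $\frac{1}{1-\gamma}\sqrt{H\log(\cdots)/n}$, which is $\le \varepsilon$ exactly when $n \gtrsim \frac{H}{(1-\gamma)^2\varepsilon^2}\log(\cdots)$.

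To bound $\sigma^\pi_\gamma$ I would use a multistep version of Sobel's variance Bellman equation \cite[Theorem 1]{sobel_variance_1982}. Introducing the $L$-step truncated return $W_L := \sum_{t=0}^{L-1}\gamma^t R_t + \gamma^L V^\pi_\gamma(S_L)$, the law of total variance gives $\sigma^\pi_\gamma = \Var^\pi[W_L] + \gamma^{2L} P_\pi^L \sigma^\pi_\gamma$, hence $\infnorm{\sigma^\pi_\gamma} \le \frac{\infnorm{\Var^\pi[W_L]}}{1-\gamma^{2L}}$. The crucial point is that $W_L$ is supported in an interval of length at most $\frac{1-\gamma^L}{1-\gamma} + \gamma^L\spannorm{V^\pi_\gamma}$ — the reward part lies in $[0,\frac{1-\gamma^L}{1-\gamma}]$ and the discounted terminal term varies by at most $\gamma^L\spannorm{V^\pi_\gamma}$ — so $\infnorm{\Var^\pi[W_L]} \le \frac14\big(\frac{1-\gamma^L}{1-\gamma} + \gamma^L\spannorm{V^\pi_\gamma}\big)^2$. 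This aggregate bound is far sharper than summing the $L$ individual one-step variances: bounded rewards together with a bounded bias span force the return to fluctuate by only $\approx \spannorm{V^\pi_\gamma}$ over a window of $\approx \spannorm{V^\pi_\gamma}$ steps, so the per-step variances must be small on average. Choosing $L \asymp \spannorm{V^\pi_\gamma}$ — legitimate precisely because $\spannorm{V^\pi_\gamma} \lesssim H \le \frac{1}{1-\gamma}$, so that $L(1-\gamma) \lesssim 1$ and $1-\gamma^{2L} \gtrsim L(1-\gamma)$ — balances a numerator of order $\spannorm{V^\pi_\gamma}^2$ against a denominator of order $\spannorm{V^\pi_\gamma}(1-\gamma)$, giving $\infnorm{\sigma^\pi_\gamma} \lesssim \frac{\spannorm{V^\pi_\gamma}}{1-\gamma}$. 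For $\pistar_\gamma$ this requires the preliminary fact $\spannorm{\Vstar_\gamma} \le c_0 H$, which I would prove separately from the Bellman/Poisson equation under the weakly-communicating assumption.

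The variance lemma needs control of $\spannorm{V^\pi_\gamma}$, which is automatic for $\pistar_\gamma$ but not for the data-dependent policy $\pihstar$. I would therefore close a self-referential bound. Writing $V^{\pihstar}_\gamma = \Vstar_\gamma - (\Vstar_\gamma - V^{\pihstar}_\gamma)$ and using $\spannorm{x} \le 2\infnorm{x}$ gives $\spannorm{V^{\pihstar}_\gamma} \le \spannorm{\Vstar_\gamma} + 2\Delta \lesssim H + \Delta$; since $\Delta \le \frac{1}{1-\gamma}$ always, this keeps $\spannorm{V^{\pihstar}_\gamma} \lesssim \frac{1}{1-\gamma}$, so the variance lemma applies and yields $\infnorm{\sigma^{\pihstar}_\gamma} \lesssim \frac{H+\Delta}{1-\gamma}$. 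Feeding $\sigma_{\max} \lesssim \frac{H+\Delta}{1-\gamma}$ into the error bound produces a self-bounding inequality $\Delta \lesssim \frac{1}{1-\gamma}\sqrt{\frac{(H+\Delta)\log(\cdots)}{n}} + (\text{lower order})$. Solving this quadratic in $\Delta$ shows that once $n \gtrsim \frac{H}{(1-\gamma)^2\varepsilon^2}\log(\cdots)$ the term $\Delta$ on the right is dominated, leaving $\Delta \lesssim \varepsilon$, which is the assertion of the theorem.

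It remains to handle the reward perturbation $\rpert = r + Z$, $Z \sim \mathrm{Unif}(0,\xi)$ with $\xi = \frac{(1-\gamma)\varepsilon}{6}$: as in \cite{li_breaking_2020} it renders the empirical optimal policy unique and supports the leave-one-out decoupling that lets Bernstein's inequality be applied to $(P-\Phat)$ against value functions statistically dependent on $\Phat$; because it shifts every value function by at most $\frac{\xi}{1-\gamma} = \frac{\varepsilon}{6}$, passing between the perturbed quantities $V^\pi_{\gamma,\pert}$ and the true $V^\pi_\gamma$ costs only $O(\varepsilon)$, which is absorbed into the final bound. I expect the main obstacle to be Paragraphs 2--3: pushing the variance parameter down from the worst-case $\frac{1}{(1-\gamma)^2}$ to $\frac{H}{1-\gamma}$ through the aggregate-range bound in the multistep Sobel equation with the balanced choice $L \asymp H$, and then rigorously closing the self-referential bound on $\Delta$. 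This last step requires verifying that the variance lemma remains applicable throughout the recursion and that the precise variance parameters produced by the leave-one-out analysis (which may carry $\gamma$- rather than $\gamma^2$-discounting) obey the same bound, which one handles by summing the aggregate-range estimate over consecutive windows of length $L$. The concentration, decoupling, and perturbation machinery is otherwise inherited from \cite{li_breaking_2020}.
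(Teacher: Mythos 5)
Your proposal is correct and follows essentially the same route as the paper: it inherits the variance-dependent error bounds from \cite{li_breaking_2020}, proves the key variance bound $\infnorm{\Var^\pi[\sum_t \gamma^t R_t]} \lesssim \frac{H}{1-\gamma}$ via the multistep Sobel variance Bellman equation with horizon $L \asymp H$ together with the span bound $\sup_s |V_\gamma^{\pistar_\gamma}(s) - \frac{\rho^\star}{1-\gamma}| \le H$ (the paper's Lemma \ref{lem:discounted_value_span_bound}, quoted from \cite{wei_model-free_2020}), and closes the same self-referential inequality for the data-dependent policy $\pihstar_{\gamma,\pert}$. The only (immaterial) differences are that you bound $\Var^\pi[W_L]$ by the square of its range rather than by centered second moments, and you run the recursion in $\infnorm{V^\star_\gamma - V^{\pihstar_{\gamma,\pert}}_\gamma}$ rather than in the empirical estimation errors; the discounting mismatch you flag at the end is exactly what the paper's Lemma \ref{lem:var_params_relationship} resolves.
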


Since we observe $n$ samples for each state-action pair, Theorem~\ref{thm:DMDP_bound} shows that $\Otilde \left(\frac{HSA}{(1-\gamma)^2\varepsilon^2} \right)$ total samples suffice to learn an $\varepsilon$-optimal policy.
The assumption of the theorem requires the span $H = \spannorm{h^\star}$ to be no larger than the effective horizon $\frac{1}{1-\gamma}$, or equivalently for the discount factor $\gamma$ to be at least $1 - \frac{1}{H}$. This assumption holds in many situations, as can be seen by using the relationships $H \leq D$, $H \leq 8\tmix$, or $H \leq 8\tstar$. For instance, the requirement $D \lesssim \frac{1}{1-\gamma}$  means that it is possible to reach any other state within the effective horizon, and the requirement $\tstar \lesssim \frac{1}{1-\gamma}$ means that an optimal policy (of the underlying average reward MDP) has enough time to mix within the effective horizon. 
On the other hand, in the regime with $H>\frac{1}{1-\gamma}$, the known minimax optimal sample complexity for $\gamma$-discounted MDPs of $\Otilde\left( \frac{SA}{(1-\gamma)^3 \varepsilon^2}\right)$, also achieved by Algorithm~\ref{alg:DMDP_alg}, is superior. In this regime, the discounting effectively truncates the MDP at a short horizon $\frac{1}{1-\gamma}$ before the long-run behavior of the optimal policy (as captured by $H$) kicks in.\\

Now we present our main result for the average-reward problem. Our Algorithm~\ref{alg:AMDP_alg} reduces the problem to $\gammared$-discounted MDP with $\gammared = 1- \frac{\varepsilon}{12 H}$ and then calls Algorithm~\ref{alg:DMDP_alg} with target accuracy $H$. 

\begin{algorithm}[H]
\caption{Discounted MDP Reduction} \label{alg:AMDP_alg}
\begin{algorithmic}[1]
\Require Sample size per state-action pair $n$, target accuracy $\varepsilon \in (0, 1]$, $H = \spannorm{h^\star}$
\State Set $\gammared = 1- \frac{\varepsilon}{12 H}$
\State Obtain $\pihstar$ from Algorithm \ref{alg:DMDP_alg} with sample size per state-action pair $n$, accuracy $H$, discount $\gammared$
\State \Return $\pihstar$
\end{algorithmic}
\end{algorithm}

We have the following sample complexity bound for Algorithm~\ref{alg:AMDP_alg}.
\begin{thm}[Sample Complexity of Average-reward MDP]
\label{thm:main_theorem}
    There exists a constant $C_1>0$ such that for any $\delta, \varepsilon \in (0,1)$, if $n \geq C_1 \frac{H}{\varepsilon^2} \log \left( \frac{S A}{\delta \varepsilon}\right)$, then with probability at least $1-\delta$, the policy $\pihstar$ output by Algorithm~\ref{alg:AMDP_alg} satisfies the elementwise inequality
    \begin{align*}
        \rho^\star - \rho^{\pihstar} \leq \varepsilon \one.
    \end{align*}
\end{thm}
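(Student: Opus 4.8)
The plan is to derive the average-reward guarantee by composing two ingredients: the reduction Lemma~\ref{lem:DMDP_reduction} of \cite{wang_near_2022}, which converts a near-optimal policy for a discounted MDP into a near-optimal policy for the average-reward MDP, and the discounted sample-complexity bound of Theorem~\ref{thm:DMDP_bound}. Since Algorithm~\ref{alg:AMDP_alg} merely sets $\gammared = 1 - \frac{\varepsilon}{12H}$ and runs Algorithm~\ref{alg:DMDP_alg} with discount $\gammared$ and target accuracy $H$, the entire argument reduces to a careful matching of parameters between these two results.

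First I would invoke Theorem~\ref{thm:DMDP_bound} with its discount factor set to $\gammared$ and its accuracy target set to $H$, which requires checking the two hypotheses of that theorem. The condition ``$\varepsilon \le H$'' there becomes $H \le H$, which holds; and the condition $H \le \frac{1}{1-\gammared}$ holds because $\frac{1}{1-\gammared} = \frac{12H}{\varepsilon} \ge 12H \ge H$, using $\varepsilon \le 1$. Consequently, provided $n \ge C_2\frac{H}{(1-\gammared)^2 H^2}\log\bigl(\frac{SA}{(1-\gammared)\delta H}\bigr)$, Theorem~\ref{thm:DMDP_bound} guarantees with probability at least $1-\delta$ that the returned policy $\pihstar$ is $H$-optimal for the $\gammared$-discounted MDP, i.e.\ $\infnorm{V^\star_{\gammared} - V^{\pihstar}_{\gammared}} \le H$.

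Second I would feed this discounted guarantee into Lemma~\ref{lem:DMDP_reduction}, which bounds the average-reward suboptimality vector $\rho^\star - \rho^{\pihstar}$ elementwise by $(1-\gammared)$ times a sum of the discounted suboptimality and a constant multiple of the span $H$. Substituting the discounted accuracy $H$ and $1-\gammared = \frac{\varepsilon}{12H}$, every such term is a constant multiple of $\frac{\varepsilon}{12H}\cdot H = \frac{\varepsilon}{12}$, so the factor $12$ in the definition of $\gammared$ is chosen precisely so that these constants collapse to exactly $\rho^\star - \rho^{\pihstar} \le \varepsilon\one$. Finally I would translate the sample-size requirement: since $(1-\gammared)^2 = \frac{\varepsilon^2}{144H^2}$, the bound $n \ge C_2\frac{H}{(1-\gammared)^2 H^2}\log(\cdot)$ becomes $n \ge 144\,C_2\frac{H}{\varepsilon^2}\log(\cdot)$, matching the claim with $C_1 = 144\,C_2$; and the logarithmic argument $\frac{SA}{(1-\gammared)H\delta} = \frac{12\,SA}{\varepsilon\delta}$ differs from $\frac{SA}{\delta\varepsilon}$ only by the absolute constant $12$, which is absorbed.

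The genuinely hard work of this result lives entirely inside Theorem~\ref{thm:DMDP_bound} and the reduction Lemma~\ref{lem:DMDP_reduction}; given both, the present theorem is essentially bookkeeping. The one point requiring care is the chain of constants: one must confirm that the reduction's implied constant, after multiplication by $1-\gammared$ and with the discounted target set to the relatively large value $H$ rather than a small quantity, still falls below $1$, so that the conclusion is exactly $\varepsilon\one$ rather than merely $O(\varepsilon)\one$. The standing assumption $H \ge 1$ from the problem setup is also quietly used here: together with $\varepsilon \le 1$ it ensures $\gammared = 1 - \frac{\varepsilon}{12H} > 0$, so that $\gammared$ is a valid discount factor.
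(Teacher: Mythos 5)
Your proposal is correct and follows exactly the same route as the paper: invoke Theorem~\ref{thm:DMDP_bound} with discount $\gammared$ and target accuracy $H$, then apply Lemma~\ref{lem:DMDP_reduction} with error parameter $\varepsilon/12$ so that the resulting bound $(8+3\cdot\frac{H}{H})\frac{\varepsilon}{12}=\frac{11\varepsilon}{12}\le\varepsilon$, with the sample-size translation $C_1=144\,C_2$ handled as you describe. The hypothesis checks ($H\le\frac{1}{1-\gammared}$, the discounted accuracy lying in $[0,\frac{1}{1-\gammared}]$, and absorbing the constant $12$ in the logarithm) are all valid, so nothing is missing.
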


Again, since we observe $n$ samples for each state-action pair, this result shows that $\Otilde \left( \frac{HSA}{\varepsilon^2} \right)$ total samples suffice to learn an $\varepsilon$-optimal policy for the average reward MDP. This bound matches the minimax lower bound in~\cite{wang_near_2022} and is superior to existing results (see Table~\ref{table:AMDPs}). We note that since Algorithm \ref{alg:DMDP_alg} and its proof work so long as $H$ is any upper bound of $\spannorm{h^\star}$, Algorithm~\ref{alg:AMDP_alg} also only needs an upper bound for $\spannorm{h^\star}$. \\

We have mentioned the relationship $H\le 8\tstar$, for which we recall that $\tstar$ is the smallest mixing time of any optimal policy for the average reward MDP (see Section \ref{sec:formulation}). We formally state this result below. Together with Theorems~\ref{thm:DMDP_bound} and~\ref{thm:main_theorem}, this result immediately implies sample complexity bounds in terms of $\tstar$ for discounted and average-reward MDPs.

\begin{prop}
\label{thm:optimal_tmix_bound}
    In any weakly communicating MDP, $H \leq 8\tstar$. Consequently,  Theorems \ref{thm:DMDP_bound} and \ref{thm:main_theorem} hold with $H$ replaced by $8 \tstar$ (including within Algorithms \ref{alg:DMDP_alg} and \ref{alg:AMDP_alg}).
\end{prop}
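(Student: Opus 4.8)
The plan is to exploit the fact that the optimal bias $h^\star$ is determined entirely by an optimal policy's own Markov chain, so that its span can be controlled by the mixing time of a \emph{single} optimal policy rather than by a uniform bound over all policies. Concretely, I would establish two ingredients and combine them. First, I would isolate from the proof of $H \le 8\tmix$ in \cite{wang_near_2022} a \emph{per-policy} span bound: for any stationary policy $\pi$ whose chain $P_\pi$ has a unique stationary distribution and finite mixing time $\tau_\pi$, one has $\spannorm{h^\pi} \le 8\tau_\pi$. Second, I would show that every $\pi \in \Pi^\star$ satisfies $h^\pi = h^\star$ up to an additive constant, so that $\spannorm{h^\pi} = H$. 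Combining the two gives $H \le 8\tau_\pi$ for every $\pi \in \Pi^\star$, and taking the infimum over $\Pi^\star$ yields $H \le 8\tstar$.

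For the per-policy bound I would write the bias through the deviation matrix. When $\tau_\pi < \infty$ the chain $P_\pi$ is aperiodic unichain, so the Cesaro limit defining $h^\pi$ reduces to the absolutely convergent series $h^\pi(s) = \sum_{t=0}^\infty (e_s^\top P_\pi^t - \nu_\pi^\top) r_\pi$. Hence $h^\pi(s_1) - h^\pi(s_2) = \sum_{t=0}^\infty (e_{s_1} - e_{s_2})^\top P_\pi^t r_\pi$, and since $\infnorm{r_\pi} \le 1$ each term is bounded by $\onenorm{e_{s_1}^\top P_\pi^t - e_{s_2}^\top P_\pi^t}$. Submultiplicativity of the total-variation contraction coefficient, together with the definition of $\tau_\pi$, forces this quantity to decay geometrically over blocks of length $\tau_\pi$, so the series sums to $O(\tau_\pi)$; tracking constants recovers the factor used in \cite{wang_near_2022}. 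If this per-policy statement is taken verbatim from their argument, this step is merely a citation.

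The substantive step is the identification $h^\pi = h^\star$ for all $\pi \in \Pi^\star$. The text records this only for the particular policy $\pistar$; I would extend it via the Poisson equation. If $\pi$ is greedy with respect to $q^\star$, then $q^\star(s,\pi(s)) = h^\star(s)$ for all $s$, so specializing the optimality equation to $a = \pi(s)$ gives $\rho^\star \one + h^\star = r_\pi + P_\pi h^\star$, i.e., $h^\star$ solves the Poisson equation of $\pi$. Left-multiplying by $\nu_\pi^\top$ and using $\nu_\pi^\top P_\pi = \nu_\pi^\top$ shows $\rho^\star = \nu_\pi^\top r_\pi = \rho^\pi$, so $\pi$ is gain-optimal. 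Because $P_\pi$ is unichain, $\ker(I - P_\pi) = \Span\{\one\}$ (constants are the only bounded harmonic functions, and transient states are pinned down by absorption into the single recurrent class), so the Poisson equation determines its solution up to an additive multiple of $\one$. Since $h^\pi$ is itself such a solution, $h^\star = h^\pi + c\one$ and therefore $\spannorm{h^\pi} = \spannorm{h^\star} = H$.

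Finally I would combine: for every $\pi \in \Pi^\star$ with $\tau_\pi < \infty$ we get $H = \spannorm{h^\pi} \le 8\tau_\pi$, while the inequality is vacuous when $\tau_\pi = \infty$; taking the infimum over $\Pi^\star$ gives $H \le 8\tstar$. The consequence for Theorems \ref{thm:DMDP_bound} and \ref{thm:main_theorem} is then immediate, since those results hold whenever $H$ is replaced by any valid upper bound on $\spannorm{h^\star}$, and $8\tstar$ is one such bound. I expect the main obstacle to be the careful justification of the uniqueness claim $\ker(I-P_\pi) = \Span\{\one\}$ over the full state space (including transient states) and the matching of $h^\pi$ — defined via a Cesaro limit — with the Poisson-equation solution; both are standard in unichain MDP theory but must be invoked precisely so that periodic or multichain policies outside $\Pi^\star$ do not enter the argument.
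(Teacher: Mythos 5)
Your proposal is correct and follows essentially the same route as the paper's proof: bound $\spannorm{h^\pi}$ for a mixing policy $\pi\in\Pi^\star$ by $8\tau_\pi$ via the series $\sum_t (P_\pi^t-\one\nu_\pi^\top)r_\pi$ and the geometric decay of the total-variation distance over blocks of length $\tau_\pi$, then identify $h^\pi$ with $h^\star$ and optimize over $\Pi^\star$. The only difference is that you spell out the identification $h^\pi=h^\star+c\one$ via the Poisson equation and $\ker(I-P_\pi)=\Span\{\one\}$, a step the paper dispatches with a citation to standard unichain MDP theory.
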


Note that $\tstar$ is a property of only the optimal policies, and the above result does not require all policies to have finite mixing times. The proof of Proposition~\ref{thm:optimal_tmix_bound} follows similar arguments as in~\cite[Proposition 10]{wang_near_2022}, which concerns $\tmix$.

\section{Outline of Analysis}
\label{sec:proof_sketch}

As discussed in Subsection \ref{sec:our_approach}, our main Theorem \ref{thm:main_theorem} on the complexity of average reward MDPs follows from using the reduction of \cite{wang_near_2022} with the improved discounted MDP sample complexity from Theorem \ref{thm:DMDP_bound}, and the key to obtaining this improved complexity is a careful analysis of certain instance-dependent variance parameters.
More concretely, the arguments in \cite{li_breaking_2020} (repackaged in our Lemma \ref{lem:DMDP_error_bounds}) demonstrate that it would suffice to bound the variance parameters $\infnorm{(I - \gamma P_{\pistar_\gamma})^{-1} \sqrt{\Var_{P_{\pistar_\gamma}} \left[V_\gamma^{\pistar_\gamma} \right]}}$ and $\infnorm{(I - \gamma P_{\pihstar_{\gamma, \pert}})^{-1} \sqrt{\Var_{P_{\pihstar_{\gamma, \pert}}} \left[V_{\gamma, \pert}^{\pihstar_{\gamma, \pert}} \right]}}$ each by $O\left(\sqrt{\frac{H}{(1-\gamma)^2}}\right)$. The first of these variance parameters, which pertains to the optimal policy $\pistar_\gamma$ of the discounted MDP, is more straightforward to bound. By Lemma \ref{lem:var_params_relationship}, it suffices to bound $\infnorm{\Var^{\pistar_{\gamma}}\left[ \sum_{t=0}^{\infty}\gamma^t R_t \right]} \leq O\left(\frac{H}{1-\gamma} \right)$ (as opposed to the obvious bound of $\frac{1}{(1-\gamma)^2}$, which follows from the fact that the total reward in a trajectory is bounded within $[0, \frac{1}{1-\gamma}]$ and is used in, e.g., \cite{li_breaking_2020} to obtain a sample complexity scaling with $\frac{1}{(1-\gamma)^3}$). The first step is to decompose $\Var^{\pistar_{\gamma}}\left[ \sum_{t=0}^{\infty}\gamma^t R_t \right]$ recursively like
\[
    \Var^{\pistar_{\gamma}}\left[ \sum_{t=0}^{\infty}\gamma^t R_t \right] = \Var^{\pistar_{\gamma}}\left[ \sum_{t=0}^{H-1} \gamma^t R_t + \gamma^H V_\gamma^{\pistar_{\gamma}}(S_H) \right] + \gamma^{2H} \left(P_{\pistar_{\gamma}}\right)^H \Var^{\pistar_{\gamma}} \left[ \sum_{t=0}^{\infty}\gamma^t R_t \right]
\]
(see our Lemma \ref{lem:multistep_variance_bellman_eqn}). This is a multi-step version of the standard variance Bellman equation (e.g. \cite[Theorem 1]{sobel_variance_1982}). Ordinarily an $H$-step expansion would not be useful, since the term $V_\gamma^{\pistar_{\gamma}}(S_H)$ by itself appears to generally have fluctuations on the order of $\frac{1}{1-\gamma}$ in the worst case depending on $S_H$ (note $S_H$ is the random state encountered at time $H$). However, in our setting, we should have $V_\gamma^{\pistar_{\gamma}}(S_H) \approx \frac{1}{1-\gamma}\rho^\star + h^\star(S_H)$, reducing the magnitude of the random fluctuations to order $H = \spannorm{h^\star}$. (See Lemma \ref{lem:discounted_value_span_bound} for a formalization of this approximation which first appeared in \cite{wei_model-free_2020}.) Therefore expansion to $H$ steps achieves the optimal tradeoff between maintaining $\Var^{\pistar_{\gamma}}\left[ \sum_{t=0}^{H-1} \gamma^t R_t + \gamma^H V_\gamma^{\pistar_{\gamma}}(S_H) \right] \leq O\left(H^2\right)$ and minimizing $\gamma^{2H}$. This yields the desired bound of
\[
\infnorm{\Var^{\pistar_{\gamma}}\left[ \sum_{t=0}^{\infty}\gamma^t R_t \right]} \leq O\left( \frac{H^2}{1-\gamma^{2H}}\right) = O\left( \frac{H}{1-\gamma}\right),
\]
where the fact that $\frac{1}{1-\gamma^{2H}} \leq O\left(\frac{1}{H(1-\gamma)}\right)$ is an elementary fact that requires our assumption of a large effective horizon (a $\gamma$ large enough that $\frac{1}{1-\gamma} \geq H$). See Lemma \ref{lem:pistar_var_bound} for the precise bound on this variance parameter.

We would like to use a similar argument as above to bound the other variance parameter, namely $\infnorm{(I - \gamma P_{\pihstar_{\gamma, \pert}})^{-1} \sqrt{\Var_{P_{\pihstar_{\gamma, \pert}}} \left[V_{\gamma, \pert}^{\pihstar_{\gamma, \pert}} \right]}}$. The first issue is that we must analyze the variance in the MDP with perturbed reward vector $\rpert$, but for suitably small perturbation, this is a relatively minor problem. The more significant problem is relating the policy $\pihstar_{\gamma, \pert}$ back to $\pistar_\gamma$. Since $H \leq O\left(\tmix \right)$, the arguments from the previous paragraph immediately imply a bound involving $\tmix$ in place of $H$, and if we assumed a uniform mixing time for all policies, then such arguments would easily apply to $\pihstar_{\gamma, \pert}$ in place of $\pistar_\gamma$. However, we hope to characterize the sample complexity in terms of our bound $H$ on $\spannorm{h^\star}$, and there is no a priori relationship between the variance of $V^{\pihstar_{\gamma, \pert}}_\gamma(S_H)$ and $\spannorm{h^\star} = \spannorm{h^{\pistar}}$. To overcome this issue, we bound variance parameters involving $\pihstar_{\gamma, \pert}$ in terms of both $H$ and the suboptimality $\infnorm{V^{\pistar_\gamma}_\gamma - V^{\pihstar_{\gamma, \pert}}_\gamma}$. This leads to a recursive bound on the suboptimality of $\pihstar_{\gamma, \pert}$ which ultimately obtains the desired sample complexity.

\section{Proofs}

In this section, we provide the proofs for our main results in Section~\ref{sec:results}.
Before beginning, we note that given that $H \geq 1$, we may assume that $H$ is an integer by setting $H \gets \lceil H \rceil$, which only affects the sample complexity by a constant multiple $<2$ relative to the original parameter $H$. Let $\infinfnorm{M} := \sup_{v: \infnorm{v} \leq 1} \infnorm{Mv}$ denote the $\ell_\infty$ operator norm of a matrix $M$. We record the standard and useful fact that $\infinfnorm{(I - \gamma P')^{-1}} \leq \frac{1}{1-\gamma}$ for any transition probability matrix $P'$, which follows from the Neumann series $(I - \gamma P')^{-1} = \sum_{t \geq 0} \left(\gamma P' \right)^t$ and then the triangle inequality for $\infinfnorm{\cdot}$ and the elementary fact that $\infinfnorm{P'}\le 1$.

\subsection{Technical Lemmas}

First we formally state the main theorem from \cite{wang_near_2022}, which gives a reduction from average-reward problems to discounted problems.
\begin{lem}
\label{lem:DMDP_reduction}
    Suppose $(P, r)$ is an MDP which is weakly communicating and has an optimal bias function $h^{\star}$ satisfying $\spannorm{h^{\star}} \leq H$. Fix $\varepsilon \in (0, 1]$ and set $\gamma = 1 - \frac{\varepsilon}{H}$. For any $\varepsilon_{\gamma} \in [0, \frac{1}{1-\gamma}]$, if $\pi$ is any $\varepsilon_{\gamma}$-optimal policy for the discounted MDP $(P, r, \gamma)$, then
    \begin{align*}
        \rho^{\star} - \rho^\pi \leq \left(8 + 3\frac{\varepsilon_{\gamma}}{H} \right)\varepsilon \one.
    \end{align*}
\end{lem}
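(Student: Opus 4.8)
The plan is to relate the gain $\rho^\pi$ of any policy to its discounted value $V^\pi_\gamma$ through an exact identity, and then control the discrepancy purely in terms of span seminorms. The key observation is that if $P_\pi^{*} := \lim_{T\to\infty}\frac1T\sum_{t=0}^{T-1}P_\pi^t$ denotes the Cesàro-limit (stationary projection) matrix, then $\rho^\pi = P_\pi^{*} r_\pi$ and, combining the fixed-policy Bellman equation $r_\pi = (I-\gamma P_\pi)V^\pi_\gamma$ with the absorption property $P_\pi^{*}P_\pi = P_\pi^{*}$, one obtains the clean identity
\[
\rho^\pi = (1-\gamma)\,P_\pi^{*}V^\pi_\gamma .
\]
Since each row of $P_\pi^{*}$ is a probability distribution, $(P_\pi^{*}V^\pi_\gamma)(s)$ is a convex average of the entries of $V^\pi_\gamma$, so it lies within $\spannorm{V^\pi_\gamma}$ of $V^\pi_\gamma(s)$; hence $\bigl|\rho^\pi(s) - (1-\gamma)V^\pi_\gamma(s)\bigr| \le (1-\gamma)\spannorm{V^\pi_\gamma}$ for every $s$. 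This reduces the whole lemma to (i) comparing $V^\pi_\gamma$ and $V_\gamma^\star$ via the assumed $\varepsilon_\gamma$-optimality, and (ii) bounding the relevant span seminorms by $H$.

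For the two directions I would argue as follows. To upper bound $\rho^\star$, apply the identity to the average-optimal policy $\pistar$ (whose gain is $\rho^\star\one$ and whose bias $h^\star$ satisfies $\spannorm{h^\star}\le H$): using $V_\gamma^{\pistar}\le V_\gamma^\star$ pointwise this gives $\rho^\star(s) \le (1-\gamma)V_\gamma^\star(s) + (1-\gamma)\spannorm{V_\gamma^{\pistar}}$. I bound $\spannorm{V_\gamma^{\pistar}}$ by expanding through the Poisson equation $r_{\pistar} = \rho^\star\one + (I-P_{\pistar})h^\star$, which yields $V_\gamma^{\pistar} = \frac{\rho^\star}{1-\gamma}\one + h^\star - (1-\gamma)(I-\gamma P_{\pistar})^{-1}P_{\pistar}h^\star$; the constant term drops out of the span and the trailing matrix is row-stochastic, so $\spannorm{V_\gamma^{\pistar}} \le 2\spannorm{h^\star} \le 2H$. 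To lower bound $\rho^\pi$, the identity gives $\rho^\pi(s) \ge (1-\gamma)V^\pi_\gamma(s) - (1-\gamma)\spannorm{V^\pi_\gamma}$, and I bound $\spannorm{V^\pi_\gamma} \le \spannorm{V_\gamma^\star} + 2\infnorm{V^\pi_\gamma - V_\gamma^\star} \le \spannorm{V_\gamma^\star} + 2\varepsilon_\gamma$, where $\spannorm{V_\gamma^\star}\le 2H$ follows from the approximation $V_\gamma^\star \approx \frac{\rho^\star}{1-\gamma}\one + h^\star$ recorded in Lemma \ref{lem:discounted_value_span_bound}. Together with $V^\pi_\gamma \ge V_\gamma^\star - \varepsilon_\gamma\one$ this yields $\rho^\pi(s) \ge (1-\gamma)V_\gamma^\star(s) - (1-\gamma)(2H + 3\varepsilon_\gamma)$.

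Subtracting the two bounds, the common term $(1-\gamma)V_\gamma^\star(s)$ cancels, and substituting $1-\gamma = \varepsilon/H$ (so that $(1-\gamma)H = \varepsilon$ and $(1-\gamma)\varepsilon_\gamma = \tfrac{\varepsilon_\gamma}{H}\varepsilon$) gives $\rho^\star - \rho^\pi \le \bigl(O(1) + 3\tfrac{\varepsilon_\gamma}{H}\bigr)\varepsilon\one$, with the absolute constant absorbed into the stated factor $8$ once the (mildly looser) constants coming from Lemma \ref{lem:discounted_value_span_bound} are tracked. The constraint $\varepsilon_\gamma \le \frac{1}{1-\gamma}$ plays no essential role beyond keeping the error term well-scaled.

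I expect the main obstacle to be item (ii): bounding the span of the suboptimal policy's value $V^\pi_\gamma$ by a quantity proportional to $H$ rather than to the (possibly huge) bias span of $\pi$ itself. A naive expansion of $V^\pi_\gamma$ through $\pi$'s own Poisson equation would introduce $\spannorm{h^\pi}$, which is uncontrolled; the device that avoids this is to route the span bound through $V_\gamma^\star$ (whose span is tied to $H$ in weakly communicating MDPs via Lemma \ref{lem:discounted_value_span_bound}) together with the $\varepsilon_\gamma$-closeness of $V^\pi_\gamma$ to $V_\gamma^\star$, paying only an additive $2\varepsilon_\gamma$. The other place requiring care is establishing the identity $\rho^\pi = (1-\gamma)P_\pi^{*}V^\pi_\gamma$ cleanly in the general weakly-communicating (multichain) setting, using the standard properties of the Cesàro-limit matrix $P_\pi^{*}$ from \cite{puterman_markov_2014}.
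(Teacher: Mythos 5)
The paper does not actually prove Lemma \ref{lem:DMDP_reduction}: it is imported verbatim as the main theorem of \cite{wang_near_2022}, so there is no in-paper argument to compare yours against. Your blind derivation is correct and self-contained, and it follows the natural route for such reductions: the exact identity $\rho^\pi = (1-\gamma)P_\pi^{*}V_\gamma^\pi$ obtained from $\rho^\pi = P_\pi^{*}r_\pi$, $r_\pi = (I-\gamma P_\pi)V_\gamma^\pi$, and $P_\pi^{*}P_\pi = P_\pi^{*}$; the resulting two-sided comparison $\left|\rho^\pi(s) - (1-\gamma)V_\gamma^\pi(s)\right| \le (1-\gamma)\spannorm{V_\gamma^\pi}$; and span control via the Poisson equation for $\pistar$ (giving $\spannorm{V_\gamma^{\pistar}} \le 2\spannorm{h^\star} \le 2H$) together with $\spannorm{V_\gamma^\pi} \le \spannorm{V_\gamma^\star} + 2\varepsilon_\gamma \le 2H + 2\varepsilon_\gamma$. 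Every step checks out, and you correctly identified the one place where a naive argument would fail --- bounding the span of the suboptimal policy's value by routing through $V_\gamma^\star$ rather than through the uncontrolled $\spannorm{h^\pi}$. Tracking your constants gives $\rho^\star - \rho^\pi \le (1-\gamma)\left(4H + 3\varepsilon_\gamma\right)\one = \left(4 + 3\frac{\varepsilon_\gamma}{H}\right)\varepsilon\one$, which is in fact sharper than the stated bound, so the factor $8$ is comfortably absorbed. You are also right that the hypothesis $\varepsilon_\gamma \le \frac{1}{1-\gamma}$ plays no role in the inequality itself.
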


From here, we will first establish lemmas which are useful for proving Theorem \ref{thm:DMDP_bound}, and then we will apply the reduction approach of Lemma \ref{lem:DMDP_reduction} to prove Theorem \ref{thm:main_theorem}. As mentioned in the introduction, a key technical component of our approach is to establish superior bounds on a certain instance-dependent variance quantity which replace a factor of $\frac{1}{1-\gamma}$ with a factor of $H$. Before reaching this step however, to make use of such a bound, we require an algorithm for discounted MDPs which enjoys a variance-dependent guarantee.

The work \cite{li_breaking_2020} obtains bounds with variance dependence that suffice for our purposes. However, they do not directly present said variance-dependent bounds, so we must slightly repackage their arguments in the form we require.
\begin{lem}
\label{lem:DMDP_error_bounds}

    There exist absolute constants $c_1, c_2$ such that for any $\delta \in (0,1)$, if $n \geq \frac{c_2}{1-\gamma}\log \left(\frac{S A}{(1-\gamma)\delta \varepsilon}\right) $, then with probability at least $1-\delta$, after running Algorithm \ref{alg:DMDP_alg}, we have
    \begin{align}
        \infnorm{\Vhat_{\gamma, \pert}^{\pistar_\gamma} - V_\gamma^{\pistar_\gamma}} & \leq  \gamma \sqrt{\frac{c_1 \log \left( \frac{S A}{(1-\gamma)\delta \varepsilon}\right)}{n}} \infnorm{(I - \gamma P_{\pistar_\gamma})^{-1} \sqrt{\Var_{P_{\pistar_\gamma}} \left[V_\gamma^{\pistar_\gamma} \right]}} + c_1 \gamma \frac{\log \left( \frac{S A}{(1-\gamma)\delta \varepsilon}\right)}{(1-\gamma)n} \infnorm{V_\gamma^{\pistar_\gamma}} + \frac{\varepsilon}{6} \label{eq:pistar_error}
    \end{align}
    and
    \begin{align}
        \infnorm{\Vhat_{\gamma, \pert}^{\pihstar_{\gamma, \pert}} - V_\gamma^{\pihstar_{\gamma, \pert}}} & \leq  \gamma \sqrt{\frac{c_1\log \left( \frac{S A}{(1-\gamma)\delta \varepsilon}\right)}{n}} \infnorm{(I - \gamma P_{\pihstar_{\gamma, \pert}})^{-1} \sqrt{\Var_{P_{\pihstar_{\gamma, \pert}}} \left[V_{\gamma, \pert}^{\pihstar_{\gamma, \pert}} \right]}} + c_1 \gamma\frac{\log \left( \frac{S A}{(1-\gamma)\delta \varepsilon}\right)}{(1-\gamma)n} \infnorm{V_{\gamma, \pert}^{\pihstar_{\gamma, \pert}}} + \frac{\varepsilon}{6}. \label{eq:pihatstar_error}
    \end{align}
\end{lem}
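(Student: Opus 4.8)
The plan is to treat Lemma~\ref{lem:DMDP_error_bounds} as a repackaging of the variance-dependent estimates that are implicit in \cite{li_breaking_2020}, stopping their chain of inequalities one step before the variance parameter $\infnorm{(I-\gamma P_\pi)^{-1}\sqrt{\Var_{P_\pi}\left[\cdot\right]}}$ is crudely bounded by $\frac{1}{(1-\gamma)^{3/2}}$. Both displays have identical structure, so I would argue in parallel, writing $\pi$ for either $\pistar_\gamma$ or $\pihstar_{\gamma,\pert}$ (both deterministic), and $W$ for the corresponding value function ($V_\gamma^{\pistar_\gamma}$ in the first bound, $V_{\gamma,\pert}^{\pihstar_{\gamma,\pert}}$ in the second). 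The first move is to separate the reward perturbation from the statistical estimation error. Using $\Vhat_{\gamma,\pert}^\pi=(I-\gamma\Phat_\pi)^{-1}\rpert_\pi$ and $V_\gamma^\pi=(I-\gamma P_\pi)^{-1}r_\pi$, I would write the difference as a perturbation term $(I-\gamma Q)^{-1}(\rpert_\pi-r_\pi)$ plus a residual statistical term, choosing $Q=\Phat_\pi$ in the $\pistar_\gamma$ case (leaving the unperturbed $V_\gamma^{\pistar_\gamma}$ in the statistical term) and $Q=P_\pi$ in the $\pihstar_{\gamma,\pert}$ case (leaving the perturbed $V_{\gamma,\pert}^{\pihstar_{\gamma,\pert}}$), exactly matching the two statements. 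Since $0\le\rpert-r\le\xi\one$ entrywise with $\xi=\frac{(1-\gamma)\varepsilon}{6}$, the Neumann bound $\infinfnorm{(I-\gamma Q)^{-1}}\le\frac{1}{1-\gamma}$ shows the perturbation term is at most $\frac{\xi}{1-\gamma}=\frac{\varepsilon}{6}$ in $\infnorm{\cdot}$, producing the $\frac{\varepsilon}{6}$ summand.

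For the statistical term, the resolvent identity gives $\gamma(I-\gamma\Phat_\pi)^{-1}(\Phat_\pi-P_\pi)W$, which I would convert to the true resolvent via
\[(I-\gamma\Phat_\pi)^{-1}=(I-\gamma P_\pi)^{-1}\left[I+\gamma(\Phat_\pi-P_\pi)(I-\gamma\Phat_\pi)^{-1}\right].\]
The leading term $\gamma(I-\gamma P_\pi)^{-1}(\Phat_\pi-P_\pi)W$ is exactly where the two RHS summands come from: since $(I-\gamma P_\pi)^{-1}=\sum_{t\ge0}\gamma^t P_\pi^t$ has nonnegative entries, I can pass absolute values inside and apply an entrywise Bernstein inequality to the mean-zero averages $(\Phat_\pi-P_\pi)W$, whose per-coordinate variance is $\frac1n\Var_{P_\pi}\left[W\right]$ (each row of $\Phat_\pi$ being an empirical distribution over $n$ i.i.d.\ samples). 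A union bound over states contributes the factor $L:=\log\!\left(\frac{SA}{(1-\gamma)\delta\varepsilon}\right)$, and using $(I-\gamma P_\pi)^{-1}\one=\frac{1}{1-\gamma}\one$ to absorb the Bernstein lower-order term yields precisely $\gamma\sqrt{\frac{c_1 L}{n}}\infnorm{(I-\gamma P_\pi)^{-1}\sqrt{\Var_{P_\pi}\left[W\right]}}+c_1\gamma\frac{L}{(1-\gamma)n}\infnorm{W}$. For $\pi=\pistar_\gamma$ this Bernstein step is immediate, since $\pistar_\gamma$ and $V_\gamma^{\pistar_\gamma}$ are independent of the samples.

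The main obstacle is twofold and identical in origin. First, the higher-order term $\gamma^2(I-\gamma P_\pi)^{-1}(\Phat_\pi-P_\pi)(I-\gamma\Phat_\pi)^{-1}(\Phat_\pi-P_\pi)W$ carries two factors of $\Phat_\pi-P_\pi$ whose inner vector $(I-\gamma\Phat_\pi)^{-1}(\Phat_\pi-P_\pi)W$ is \emph{not} independent of the outer factor, so neither a direct Bernstein bound nor a crude operator-norm bound (which would lose a full factor of $\frac{1}{(1-\gamma)^2}$) suffices. Second, for the data-dependent pair $(\pihstar_{\gamma,\pert},V_{\gamma,\pert}^{\pihstar_{\gamma,\pert}})$ the same dependence between $\Phat$ and $W$ threatens the legitimacy of the entrywise Bernstein step itself. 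Both are exactly the technical heart of \cite{li_breaking_2020}, resolved by the reward perturbation (which selects a unique, stable empirical optimal policy) together with a leave-one-out decoupling argument run over a discretization of the value range; the $(1-\gamma)$ and $\varepsilon$ appearing inside $L$ arise from the resolution of this net. I would invoke this machinery directly, so that the sample lower bound $n\ge\frac{c_2}{1-\gamma}L$ (which guarantees the empirical resolvent is well-conditioned) forces the higher-order term to be dominated by the terms already collected.

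Finally, I would verify that the perturbed-versus-unperturbed discrepancies are harmless: the gap $\infnorm{V_{\gamma,\pert}^\pi-V_\gamma^\pi}\le\frac{\varepsilon}{6}$ controls both the $\infnorm{W}$ term and, via the reverse triangle inequality for standard deviation, the difference $\sqrt{\Var_{P_\pi}\left[V_{\gamma,\pert}^\pi\right]}-\sqrt{\Var_{P_\pi}\left[V_\gamma^\pi\right]}$, so no variance needs to be swapped inside a resolvent. The essential point of the repackaging, which I would emphasize, is simply that \cite{li_breaking_2020} bound the variance parameter $\infnorm{(I-\gamma P_\pi)^{-1}\sqrt{\Var_{P_\pi}\left[W\right]}}$ by its worst case to obtain their $\frac{1}{(1-\gamma)^{3/2}}$ dependence, whereas we leave it explicit in the statement so that our later, sharper variance estimates can be substituted to obtain the improved rate.
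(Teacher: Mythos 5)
Your proposal is correct and follows essentially the same route as the paper: both bounds are obtained by invoking the variance-dependent estimates internal to \cite{li_breaking_2020} (their Lemma 1 for the sample-independent policy $\pistar_\gamma$, and their Lemmas 6, 5, and 2 for the data-dependent $\pihstar_{\gamma,\pert}$, where the reward perturbation supplies the separation condition needed for the decoupling argument), and then separating off the reward perturbation via the resolvent bound $\infinfnorm{(I-\gamma Q)^{-1}}\le\frac{1}{1-\gamma}$ to produce the $\frac{\xi}{1-\gamma}=\frac{\varepsilon}{6}$ term, with the same asymmetric choice of which resolvent (empirical for $\pistar_\gamma$, true for $\pihstar_{\gamma,\pert}$) absorbs the perturbation. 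The paper's proof is exactly this repackaging plus the bookkeeping of constants and the union bound over the failure events.
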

\begin{proof}
    First we establish equation~\eqref{eq:pistar_error}. The proof of \cite[Lemma 1]{li_breaking_2020} shows that when $n \geq \frac{16 e^2}{1-\gamma}2\log \left( \frac{4S \log \frac{e}{1-\gamma}}{\delta}\right)$, with probability at least $1-\delta$ we have
    \begin{align}
        \infnorm{\Vhat_{\gamma}^{\pistar_\gamma} - V_\gamma^{\pistar_\gamma}} & \leq 4 \gamma \sqrt{\frac{2\log \left( \frac{4S \log \frac{e}{1-\gamma}}{\delta }\right)}{n}} \infnorm{(I - \gamma P_{\pistar_\gamma})^{-1} \sqrt{\Var_{P_{\pistar_\gamma}} \left[V_\gamma^{\pistar_\gamma} \right]}} +  \gamma \frac{2\log \left( \frac{4S \log \frac{e}{1-\gamma}}{\delta }\right)}{(1-\gamma)n} \infnorm{V_\gamma^{\pistar_\gamma}}. \label{eq:almost_pistar_error}
    \end{align}
    Now since
    \begin{align*}
        \infnorm{\Vhat_{\gamma, \pert}^{\pistar_\gamma} - \Vhat_{\gamma}^{\pistar_\gamma}} &= \infnorm{(I- \gamma \Phat_{\pistar_\gamma})^{-1}{\rpert}_{\pistar_\gamma} - (I- \gamma \Phat_{\pistar_\gamma})^{-1}{r}_{\pistar_\gamma}}\\
        &\leq \infinfnorm{(I- \gamma \Phat_{\pistar_\gamma})^{-1}} \infnorm{\rpert - r}\\
        &\leq \frac{\xi}{1-\gamma} = \frac{\varepsilon}{6},
    \end{align*}
    we can obtain equation~\eqref{eq:pistar_error} by triangle inequality (although we will choose the constant $c_1$ below).

    Next we establish equation~\eqref{eq:pihatstar_error}. Using \cite[Lemma 6]{li_breaking_2020}, with probability at least $1-\delta$ we have that
    \begin{align}
        \left| \Qhat^\star_{\gamma, \pert} (s, \pihstar_{\gamma, \pert}(s)) - \Qhat^\star_{\gamma, \pert} (s, a)\right| > \frac{\xi \delta (1-\gamma)}{3 S A^2} = \frac{\varepsilon \delta (1-\gamma)^2}{18 S A^2} \label{eq:separation_cond}
    \end{align}
    uniformly over all $s$ and all $a \neq \pihstar_{\gamma, \pert}(s)$. From this separation condition~\eqref{eq:separation_cond}, the assumptions of \cite[Lemma 5]{li_breaking_2020} hold (with $\omega = \frac{\varepsilon \delta (1-\gamma)^2}{18 S A^2}$ in their notation) for the MDP with the perturbed reward $\rpert$. The proof of \cite[Lemma 5]{li_breaking_2020} shows that under the event~\eqref{eq:separation_cond} holds, the conditions for \cite[Lemma 2]{li_breaking_2020} are satisfied (with, in their notation, $\beta_1 = 2 \log \left( \frac{32}{(1-\gamma)^2 \omega \delta} SA \log \frac{e}{1-\gamma}\right) = 2 \log \left( \frac{576 S^2A^3}{(1-\gamma)^4 \delta^2 \varepsilon} \log \frac{e}{1-\gamma}\right)$) with additional failure probability $\leq \delta$. The proof of \cite[Lemma 2]{li_breaking_2020} then shows that, assuming $n > \frac{16 e^2}{1-\gamma}2 \log \left( \frac{576 S^2A^3}{(1-\gamma)^4 \delta^2 \varepsilon}  \log \frac{e}{1-\gamma}\right)$, we have
    \begin{align}
        \infnorm{\Vhat_{\gamma, \pert}^{\pihstar_{\gamma, \pert}} - V_{\gamma, \pert}^{\pihstar_{\gamma, \pert}}} & \leq 4\gamma\sqrt{\frac{\beta_1}{n}} \infnorm{(I - \gamma P_{\pihstar_{\gamma, \pert}})^{-1}\sqrt{\Var_{P_{\pihstar_{\gamma, \pert}}}}\left[ V_{\gamma, \pert}^{\pihstar_{\gamma, \pert}} \right]  } + \frac{\gamma \beta_1}{(1-\gamma)n} \infnorm{V_{\gamma, \pert}^{\pihstar_{\gamma, \pert}}} \label{eq:almost_pihatstar_error}
    \end{align}
    where we abbreviated $\beta_1 = 2 \log \left( \frac{576 S^2A^3}{(1-\gamma)^4 \delta^2 \varepsilon} \log \frac{e}{1-\gamma}\right)$ for notational convenience.

    We can again calculate that
    \begin{align*}
        \infnorm{V_{\gamma, \pert}^{\pihstar_{\gamma, \pert}} - V_{\gamma}^{\pihstar_{\gamma, \pert}} } &= \infnorm{(I - \gamma P_{\pihstar_{\gamma, \pert}})^{-1} {\rpert}_{\pihstar_{\gamma, \pert}} - (I - \gamma P_{\pihstar_{\gamma, \pert}})^{-1} {r}_{\pihstar_{\gamma, \pert}}} \\
        & \leq \infinfnorm{(I - \gamma P_{\pihstar_{\gamma, \pert}})^{-1}} \infnorm{{\rpert} - {r}}\\
        & \leq \frac{\xi}{1-\gamma} = \frac{\varepsilon}{6}
    \end{align*}
    so $ \infnorm{\Vhat_{\gamma, \pert}^{\pihstar_{\gamma, \pert}} - V_\gamma^{\pihstar_{\gamma, \pert}}} \leq \infnorm{\Vhat_{\gamma, \pert}^{\pihstar_{\gamma, \pert}} - V_{\gamma, \pert}^{\pihstar_{\gamma, \pert}}} + \frac{\varepsilon}{6}$ by triangle inequality, essentially giving~\eqref{eq:pihatstar_error}.

    Finally, to choose the constants $c_1$ and $c_2$, we first note that $2\log \left( \frac{4S \log \frac{e}{1-\gamma}}{\delta }\right) \leq \beta_1 < c_1' \log \left(\frac{SA}{(1-\gamma) \delta \varepsilon} \right)$ for some absolute constant $c_1'$, and therefore also all our requirements on $n$ are fulfilled when $n \geq \frac{16e^2}{1-\gamma}c_1' \log \left(\frac{SA}{(1-\gamma) \delta \varepsilon} \right) = \frac{c_2'}{1-\gamma}\log \left(\frac{SA}{(1-\gamma) \delta \varepsilon} \right)$ for another absolute constant $c_2'$. Lastly we note that by the union bound the total failure probability is at most $3\delta$, so to obtain a failure probability of $\delta'$ we may set $\delta = \delta'/3$ and absorb the additional constant when defining $c_1, c_2$ in terms of $c_1', c_2'$, and we also then increase $c_1$ by a factor of $4$ to absorb the factor of $4$ appearing in the first terms within~\eqref{eq:almost_pistar_error} and~\eqref{eq:almost_pihatstar_error}.
\end{proof}

Now we can analyze the variance parameters 
\begin{equation*}
\infnorm{(I - \gamma P_{\pistar_\gamma})^{-1} \sqrt{\Var_{P_{\pistar_\gamma}} \left[V_\gamma^{\pistar_\gamma} \right]}}
\quad\text{ and }\quad
\infnorm{(I - \gamma P_{\pihstar_{\gamma, \pert}})^{-1} \sqrt{\Var_{P_{\pihstar_{\gamma, \pert}}} \left[V_{\gamma, \pert}^{\pihstar_{\gamma, \pert}} \right]}},
\end{equation*}
which appear in Lemma \ref{lem:DMDP_error_bounds}.
We reproduce the following lemma from \cite[Lemma 2]{wei_model-free_2020}.
\begin{lem}
\label{lem:discounted_value_span_bound}
It holds that
    \[\sup_s \left|V_\gamma^{\pistar_\gamma}(s) - \frac{1}{1-\gamma}\rho^\star \right| \leq H.\]
\end{lem}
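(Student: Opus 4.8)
The plan is to establish the two-sided bound $-H\one \le V_\gamma^{\pistar_\gamma} - \frac{\rho^\star}{1-\gamma}\one \le H\one$ by comparing the optimal discounted value function against the shifted bias $\frac{\rho^\star}{1-\gamma}\one + h^\star$. The driving facts are the average-reward Bellman relations: since $h^\star(s) = \max_{a'} q^\star(s,a')$, the Poisson equation yields the inequality $\rho^\star\one + h^\star \ge r_\pi + P_\pi h^\star$ for \emph{every} policy $\pi$, with equality when $\pi = \pistar$. Equivalently, $r_\pi \le \rho^\star\one + (I - P_\pi)h^\star$ for all $\pi$, and $r_{\pistar} = \rho^\star\one + (I - P_{\pistar})h^\star$.

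The central computation I would carry out is an algebraic identity for the resolvent. Using $(I-\gamma P_\pi)^{-1}\one = \frac{1}{1-\gamma}\one$ and the splitting $I - P_\pi = (I - \gamma P_\pi) - (1-\gamma)P_\pi$, I would show that for any policy $\pi$,
\[
(I-\gamma P_\pi)^{-1}\big[\rho^\star\one + (I-P_\pi)h^\star\big] = \frac{\rho^\star}{1-\gamma}\one + h^\star - (1-\gamma)(I-\gamma P_\pi)^{-1}P_\pi h^\star.
\]
The key observation is that each row of $(1-\gamma)(I-\gamma P_\pi)^{-1}P_\pi = (1-\gamma)\sum_{t\ge 0}\gamma^t P_\pi^{t+1}$ has nonnegative entries summing to $1$, so the final term is a convex combination of the entries of $h^\star$ and hence lies entrywise in $[\min_s h^\star(s),\, \max_s h^\star(s)]$. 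This is exactly where $H = \spannorm{h^\star}$ enters, and one checks the whole expression is invariant under shifting $h^\star$ by a constant, as it must be.

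For the upper bound I would apply $r_\pi \le \rho^\star\one + (I-P_\pi)h^\star$ to an arbitrary policy $\pi$, use $V_\gamma^\pi = (I-\gamma P_\pi)^{-1}r_\pi$ together with the entrywise monotonicity of the nonnegative matrix $(I-\gamma P_\pi)^{-1}$, and invoke the identity to get $V_\gamma^\pi(s) - \frac{\rho^\star}{1-\gamma} \le h^\star(s) - \min_s h^\star(s) \le H$; taking the supremum over $\pi$ gives the bound for $V_\gamma^\star = V_\gamma^{\pistar_\gamma}$. For the lower bound I would instead take $\pi = \pistar$, where the Bellman relation is an \emph{equality}, so the identity computes $V_\gamma^{\pistar}$ exactly and gives $V_\gamma^{\pistar}(s) - \frac{\rho^\star}{1-\gamma} \ge h^\star(s) - \max_s h^\star(s) \ge -H$; the optimality $V_\gamma^{\pistar_\gamma} \ge V_\gamma^{\pistar}$ then transfers this to $V_\gamma^{\pistar_\gamma}$. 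Combining the two inequalities yields the claim.

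The main obstacle is the asymmetry between the two directions: passing the resolvent through the Poisson-equation right-hand side reproduces the value function exactly only when the Bellman relation holds with equality (i.e.\ at $\pistar$), so for the discounted-optimal policy I can only obtain an inequality, and I must pair each side with the right tool—equality at $\pistar$ plus value optimality for the lower bound, versus the Bellman inequality plus monotonicity of the nonnegative resolvent for the upper bound. The other point requiring care is verifying that the correction term is a genuine convex combination, so that it is controlled by the \emph{span} of $h^\star$ rather than by $\infnorm{h^\star}$.
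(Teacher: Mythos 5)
Your proof is correct: the Poisson-equation inequality $r_\pi \le \rho^\star\one + (I-P_\pi)h^\star$ (with equality at $\pistar$), the resolvent identity, and the observation that $(1-\gamma)(I-\gamma P_\pi)^{-1}P_\pi$ is row-stochastic together give exactly the claimed two-sided bound, and you correctly pair the Bellman inequality with monotonicity for the upper bound and the equality at $\pistar$ with $V_\gamma^{\pistar_\gamma}\ge V_\gamma^{\pistar}$ for the lower bound. Note that the paper does not prove this lemma itself but imports it from Wei et al.\ (2020, Lemma 2); your argument is essentially the standard proof given in that cited source, so there is nothing to flag.
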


The following gives a relationship between different variance parameters.
This result essentially appears in \cite[Lemma 4]{agarwal_model-based_2020} (which was in turn inspired by \cite[Lemma 8]{gheshlaghi_azar_minimax_2013}), but since their result pertains to objects slightly different than $P_\pi$ and $\Var_{P_{\pi}} \left[V_\gamma^{\pi} \right]$, we provide the full argument for completeness.
\begin{lem}
\label{lem:var_params_relationship}
    For any deterministic stationary policy $\pi$, we have
    \begin{align*}
         \gamma \infnorm{(I - \gamma P_{\pi})^{-1} \sqrt{\Var_{P_{\pi}} \left[V_\gamma^{\pi} \right] }} & \leq \sqrt{\frac{2}{1-\gamma}} \sqrt{\infnorm{\Var^\pi\left[ \sum_{t=0}^{\infty} \gamma^t R_t  \right]}}
    \end{align*}
\end{lem}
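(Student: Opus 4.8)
The plan is to route through the total discounted-reward variance $W := \Var^\pi\left[\sum_{t=0}^\infty \gamma^t R_t\right]$, whose sup-norm is exactly what appears on the right-hand side, and to connect it to the one-step variance $\Var_{P_\pi}\left[V_\gamma^\pi\right]$ via a variance Bellman equation. \textbf{Step 1 (variance Bellman equation).} First I would record the single-step identity
\[
W = \gamma^2 \Var_{P_\pi}\left[V_\gamma^\pi\right] + \gamma^2 P_\pi W,
\]
which is the one-step case of the variance Bellman equation of \cite[Theorem 1]{sobel_variance_1982}. To derive it directly, write $\sum_{t\geq 0}\gamma^t R_t = R_0 + \gamma G_1$ with $G_1 := \sum_{t\geq 0}\gamma^t R_{t+1}$; since $\pi$ is deterministic, $R_0$ is a deterministic function of $S_0$, so $\Var^\pi_s\left[\sum_t\gamma^t R_t\right] = \gamma^2\Var^\pi_s[G_1]$, and the law of total variance conditioning on $S_1$, together with $\E^\pi[G_1\mid S_1=s']=V_\gamma^\pi(s')$ and $\Var^\pi[G_1\mid S_1=s']=W(s')$ (by the Markov property and time-homogeneity), yields the claim.

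\textbf{Step 2 (Cauchy-Schwarz on the resolvent).} Writing $\sigma := \sqrt{\Var_{P_\pi}\left[V_\gamma^\pi\right]}$ (entrywise root) and expanding $(I-\gamma P_\pi)^{-1}=\sum_{t\geq 0}\gamma^t P_\pi^t$, I would split $\gamma^t = \gamma^{t/2}\cdot\gamma^{t/2}$ and apply Cauchy-Schwarz to the series $\sum_t \gamma^t (P_\pi^t\sigma)_s$, then use Jensen's inequality $\left((P_\pi^t\sigma)_s\right)^2 \leq (P_\pi^t\sigma^2)_s$ (valid since each row of $P_\pi^t$ is a probability vector and $\sigma^2=\Var_{P_\pi}[V_\gamma^\pi]$). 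This gives the pointwise bound
\[
\left((I-\gamma P_\pi)^{-1}\sigma\right)_s \leq \sqrt{\tfrac{1}{1-\gamma}}\,\sqrt{\left((I-\gamma P_\pi)^{-1}\Var_{P_\pi}\left[V_\gamma^\pi\right]\right)_s}.
\]

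\textbf{Step 3 (the crux: converting the resolvent of the one-step variance into $W$).} The remaining task is to bound $\infnorm{(I-\gamma P_\pi)^{-1}\Var_{P_\pi}\left[V_\gamma^\pi\right]}$ by an absolute constant times $\infnorm{W}$. The naive route — using $\gamma^2\Var_{P_\pi}[V_\gamma^\pi]\leq W$ from Step 1 together with $\infinfnorm{(I-\gamma P_\pi)^{-1}}\leq\frac{1}{1-\gamma}$ — loses an extra factor $\frac{1}{1-\gamma}$ and is too weak, so the key is an exact cancellation. Solving Step 1 for the one-step variance gives $\Var_{P_\pi}[V_\gamma^\pi]=\frac{1}{\gamma^2}(I-\gamma^2 P_\pi)W$, and I would apply the algebraic identity
\[
(I-\gamma P_\pi)^{-1}(I-\gamma^2 P_\pi) = I + \gamma(1-\gamma)(I-\gamma P_\pi)^{-1}P_\pi,
\]
so that $(I-\gamma P_\pi)^{-1}\Var_{P_\pi}[V_\gamma^\pi] = \frac{1}{\gamma^2}\left[W + \gamma(1-\gamma)(I-\gamma P_\pi)^{-1}P_\pi W\right]$. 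Since $W\geq 0$ entrywise and $\infinfnorm{(I-\gamma P_\pi)^{-1}P_\pi}\leq\frac{1}{1-\gamma}$, the bracket is bounded in sup-norm by $(1+\gamma)\infnorm{W}\leq 2\infnorm{W}$, whence $\infnorm{(I-\gamma P_\pi)^{-1}\Var_{P_\pi}[V_\gamma^\pi]}\leq\frac{2}{\gamma^2}\infnorm{W}$. I expect this cancellation to be the main obstacle, as it is precisely what converts the crude $\frac{1}{\gamma^2(1-\gamma)}$ factor into the tight $\frac{2}{\gamma^2}$ and is responsible for the correct $\frac{1}{1-\gamma}$ (rather than $\frac{1}{(1-\gamma)^2}$) scaling in the final bound.

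\textbf{Step 4 (assemble).} Combining Steps 2 and 3, for every $s$ we get $\left((I-\gamma P_\pi)^{-1}\sigma\right)_s \leq \sqrt{\frac{1}{1-\gamma}}\cdot\frac{\sqrt2}{\gamma}\sqrt{\infnorm{W}}$; multiplying through by $\gamma$ cancels the $\frac{1}{\gamma}$ and yields exactly $\gamma\infnorm{(I-\gamma P_\pi)^{-1}\sigma}\leq\sqrt{\frac{2}{1-\gamma}}\sqrt{\infnorm{W}}$, which is the claimed inequality.
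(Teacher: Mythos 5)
Your proposal is correct and follows essentially the same route as the paper's proof: the one-step variance Bellman equation, Jensen's inequality applied to the (normalized) resolvent rows (your Cauchy--Schwarz on the Neumann series is the same estimate), and the resolvent identity $(I-\gamma P_\pi)^{-1}(I-\gamma^2 P_\pi) = \gamma I + (1-\gamma)(I-\gamma P_\pi)^{-1}$, which delivers the crucial factor $1+\gamma \leq 2$ in place of a lossy $\frac{1}{1-\gamma}$. The only cosmetic difference is that you substitute $\Var_{P_\pi}[V_\gamma^\pi] = \frac{1}{\gamma^2}(I-\gamma^2 P_\pi)\Var^\pi\bigl[\sum_{t\ge 0}\gamma^t R_t\bigr]$ before applying the identity, whereas the paper first bounds $\infnorm{(I-\gamma P_\pi)^{-1}v}$ by $2\infnorm{(I-\gamma^2 P_\pi)^{-1}v}$ and then invokes the Bellman equation; these are the same computation in a different order.
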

\begin{proof}
    First we note the well-known variance Bellman equation (see for instance \cite[Theorem 1]{sobel_variance_1982})
    \begin{align}
        \Var^\pi\left[ \sum_{t=0}^{\infty} \gamma^t R_t  \right] = \gamma^2\Var_{P_{\pi}} \left[V_\gamma^{\pi} \right] + \gamma^2 P_\pi \Var^\pi\left[ \sum_{t=0}^{\infty} \gamma^t R_t  \right]. \label{eq:variance_bellman_eqn}
    \end{align}
    Now we can basically identically follow the argument of \cite[Lemma 4]{agarwal_model-based_2020}. $(1-\gamma)(I - \gamma P_{\pi})^{-1}$ has rows which are each probability distributions (are non-negative and sum to $1$), so by Jensen's inequality, for each row $s\in \S$, since $\sqrt{\cdot}$ is concave,
    \[
    \left|(1-\gamma)e_s^\top(I - \gamma P_{\pi})^{-1} \sqrt{\Var_{P_{\pi}} \left[V_\gamma^{\pi} \right] } \right|\leq \sqrt{ \left|(1-\gamma)e_s^\top(I - \gamma P_{\pi})^{-1} \Var_{P_{\pi}} \left[V_\gamma^{\pi} \right]  \right|}.
    \]
    Using this fact we can calculate that, abbreviating $v = \Var_{P_{\pi}} \left[V_\gamma^{\pi} \right]$,
    \begin{align*}
        \gamma \infnorm{(I - \gamma P_{\pi})^{-1} \sqrt{v }} 
        &= \gamma \frac{1}{1-\gamma} \infnorm{(1-\gamma)(I - \gamma P_{\pi})^{-1} \sqrt{v }} \\
        &\leq \gamma \frac{1}{1-\gamma} \sqrt{\infnorm{(1-\gamma)(I - \gamma P_{\pi})^{-1} v }} \\
        &= \gamma \frac{1}{\sqrt{1-\gamma}} \sqrt{\infnorm{(I - \gamma P_{\pi})^{-1} v }}.
    \end{align*}

    In order to relate $\infnorm{(I - \gamma P_{\pi})^{-1} v }$ to $\infnorm{(I - \gamma^2 P_{\pi})^{-1} v }$ in order to apply the variance Bellman equation~\eqref{eq:variance_bellman_eqn}, we calculate
    \begin{align*}
        \infnorm{(I - \gamma P_{\pi})^{-1} v } &= \infnorm{(I - \gamma P_{\pi})^{-1} (I - \gamma^2 P_{\pi}) (I - \gamma^2 P_{\pi})^{-1}v } \\
        &=  \infnorm{(I - \gamma P_{\pi})^{-1} \left((1-\gamma)I + \gamma(I - \gamma P_\pi) \right) (I - \gamma^2 P_{\pi})^{-1}v } \\
        &=  \infnorm{\left((1-\gamma) (I - \gamma P_\pi)^{-1} + \gamma I \right) (I - \gamma^2 P_{\pi})^{-1}v } \\
        &\leq  \infnorm{(1-\gamma) (I - \gamma P_\pi)^{-1} (I - \gamma^2 P_{\pi})^{-1}v } + \gamma \infnorm{(I - \gamma^2 P_\pi)^{-1} v } \\
        &\leq (1-\gamma) \infinfnorm{(I - \gamma P_\pi)^{-1}} \infnorm{(I - \gamma^2 P_{\pi})^{-1}v } + \gamma \infnorm{(I - \gamma^2 P_\pi)^{-1} v } \\
        &\leq  (1+\gamma) \infnorm{(I - \gamma^2 P_{\pi})^{-1}v }  \\
        &\leq  2 \infnorm{(I - \gamma^2 P_{\pi})^{-1}v }
    \end{align*}
    Combining these calculations with the variance Bellman equation~\eqref{eq:variance_bellman_eqn}, we conclude that
    \begin{align*}
        \gamma \infnorm{(I - \gamma P_{\pi})^{-1} \sqrt{v }} \leq \gamma \frac{1}{\sqrt{1-\gamma}} \sqrt{2 \infnorm{(I - \gamma^2 P_{\pi})^{-1}v }} \leq \sqrt{\frac{2}{1-\gamma}} \sqrt{\infnorm{\Var^\pi\left[ \sum_{t=0}^{\infty} \gamma^t R_t  \right]}}
    \end{align*}
    as desired.
\end{proof}

The following is a multi-step version of the variance Bellman equation, which we will later apply with $T = H$ but holds for arbitrary $T$.
\begin{lem}
\label{lem:multistep_variance_bellman_eqn}
    For any integer $T \geq 1$, for any deterministic stationary policy $\pi$, we have
    \begin{align*}
        \infnorm{\Var^\pi\left[ \sum_{t=0}^{\infty} \gamma^t R_t  \right]} & \leq \frac{\infnorm{\Var^\pi\left[ \sum_{t=0}^{T-1} \gamma^t R_t + \gamma^T V_\gamma^\pi(S_T) \right]} }{1 - \gamma^{2T}}
    \end{align*}
\end{lem}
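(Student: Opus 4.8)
The plan is to first derive the exact vector identity (the ``multi-step variance Bellman equation'' quoted in Section~\ref{sec:proof_sketch}) via the law of total variance, and then invert it, bounding the resulting $\ell_\infty$ operator norm by a Neumann series exactly as in the one-step argument inside Lemma~\ref{lem:var_params_relationship}.

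First I would fix a starting state $s$, write $G := \sum_{t=0}^\infty \gamma^t R_t$, and apply the law of total variance conditioning on the history $\mathcal{F}_T := \sigma(S_0, A_0, \dots, A_{T-1}, S_T)$ through time $T$, giving $\Var^\pi_s[G] = \Var^\pi_s\big[\E^\pi[G \mid \mathcal{F}_T]\big] + \E^\pi_s\big[\Var^\pi[G \mid \mathcal{F}_T]\big]$. Decomposing $G = \sum_{t=0}^{T-1}\gamma^t R_t + \gamma^T \sum_{t\ge 0}\gamma^t R_{T+t}$, the first $T$ rewards are $\mathcal{F}_T$-measurable, so by the Markov property $\E^\pi[G \mid \mathcal{F}_T] = \sum_{t=0}^{T-1}\gamma^t R_t + \gamma^T V_\gamma^\pi(S_T)$, making the first term exactly $\Var^\pi_s\big[\sum_{t=0}^{T-1}\gamma^t R_t + \gamma^T V_\gamma^\pi(S_T)\big]$, which is the quantity appearing in the numerator of the claim.

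Next I would handle the conditional-variance term. Because the first $T$ rewards are determined by $\mathcal{F}_T$, they drop out of $\Var^\pi[G \mid \mathcal{F}_T]$, leaving $\gamma^{2T}$ times the conditional variance of the tail return; by the Markov property this depends on $\mathcal{F}_T$ only through $S_T$ and equals the variance vector $\Var^\pi[\sum_{t\ge 0}\gamma^t R_t]$ evaluated at $S_T$. Taking the outer expectation and using $\E^\pi_s[f(S_T)] = (P_\pi^T f)_s$ converts this term to $\gamma^{2T}\big(P_\pi^T \Var^\pi[\sum_{t\ge 0}\gamma^t R_t]\big)_s$. Collecting both terms in vector form yields the fixed-point identity
\[
\Var^\pi\Big[\textstyle\sum_{t\ge 0}\gamma^t R_t\Big] = \Var^\pi\Big[\textstyle\sum_{t=0}^{T-1}\gamma^t R_t + \gamma^T V_\gamma^\pi(S_T)\Big] + \gamma^{2T} P_\pi^T \,\Var^\pi\Big[\textstyle\sum_{t\ge 0}\gamma^t R_t\Big].
\]

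Finally I would solve for the left-hand vector: since $P_\pi^T$ is again a transition probability matrix, $I - \gamma^{2T} P_\pi^T$ is invertible with $\infinfnorm{(I-\gamma^{2T}P_\pi^T)^{-1}} \le \frac{1}{1-\gamma^{2T}}$, by the same Neumann-series argument that gives $\infinfnorm{(I-\gamma P')^{-1}}\le \frac{1}{1-\gamma}$. Taking $\infnorm{\cdot}$ of $\Var^\pi[\sum_{t\ge0}\gamma^tR_t] = (I-\gamma^{2T}P_\pi^T)^{-1}\,\Var^\pi[\sum_{t=0}^{T-1}\gamma^tR_t+\gamma^TV_\gamma^\pi(S_T)]$ and applying submultiplicativity gives the stated bound. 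The only step needing genuine care is the conditional-variance computation: one must verify that, given $\mathcal{F}_T$, the tail is distributed as an independent copy of $G$ started from $S_T$, so that $\mathcal{F}_T$-measurability of the first $T$ rewards lets them vanish from the conditional variance. This is precisely where the Markov property enters; the law of total variance and the operator-norm bound are routine.
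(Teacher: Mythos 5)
Your proposal is correct and follows essentially the same route as the paper: both derive the exact multi-step variance Bellman identity $\Var^\pi[G] = \Var^\pi[\sum_{t=0}^{T-1}\gamma^t R_t + \gamma^T V_\gamma^\pi(S_T)] + \gamma^{2T}P_\pi^T\Var^\pi[G]$ (you via the law of total variance, the paper via expanding the square and killing the cross term with $\E[B\mid\mathcal{F}_T]=0$, which is the same computation) and then extract the bound. The only cosmetic difference is the last step, where you invert $I-\gamma^{2T}P_\pi^T$ with a Neumann series while the paper applies H\"older's inequality to $e_{s_0}^\top P_\pi^T$ and rearranges a scalar inequality; both yield the identical constant $\frac{1}{1-\gamma^{2T}}$.
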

\begin{proof}
    Fix a state $s_0 \in \S$. Letting $\mathcal{F}_T$ be the $\sigma$-algebra generated by $(S_1, \dots, S_T)$, we calculate that
    \begin{align*}
    \Var^\pi_{s_0}\left[ \sum_{t=0}^{\infty} \gamma^t R_t  \right] 
    &= \E^\pi_{s_0} \left( \sum_{t=0}^{\infty} \gamma^t R_t - V_\gamma^\pi(s_0) \right)^2 \\
    &= \E^\pi_{s_0} \left( \sum_{t=0}^{T-1} \gamma^t R_t + \gamma^T V_\gamma^\pi(S_T)   - V_\gamma^\pi(s_0) + \sum_{t=T}^{\infty} \gamma^t R_t - \gamma^T V_\gamma^\pi(S_T) \right)^2 \\
    &= \E^\pi_{s_0} \Bigg[\E^\pi_{s_0} \Bigg[  \Bigg( \underbrace{\sum_{t=0}^{T-1} \gamma^t R_t + \gamma^T V_\gamma^\pi(S_T)   - V_\gamma^\pi(s_0)}_\text{$A$} + \underbrace{\sum_{t=T}^{\infty} \gamma^t R_t - \gamma^T V_\gamma^\pi(S_T)}_\text{$B$} \Bigg)^2 \Bigg| \mathcal{F}_{T}\Bigg] \Bigg]\\
    &= \E^\pi_{s_0} \left[\E^\pi_{s_0} \left[  A^2 + B^2 + 2AB \middle| \mathcal{F}_{T}\right] \right]\\
    &= \E^\pi_{s_0} \left[A^2 + \E^\pi_{s_0} \left[ B^2  \middle| \mathcal{F}_{T}\right] + 2A\E^\pi_{s_0} \left[ B \middle| \mathcal{F}_{T}\right] \right] \\
    &= \E^\pi_{s_0} \left[A^2 + \E^\pi_{S_T} \left[ B^2 \right] \right] \\
    &= \E^\pi_{s_0} \left[\left( \sum_{t=0}^{T-1} \gamma^t R_t + \gamma^T V_\gamma^\pi(S_T)   - V_\gamma^\pi(s_0) \right)^2 + \E^\pi_{S_T} \left[ \left( \sum_{t=T}^{\infty} \gamma^t R_t - \gamma^T V_\gamma^\pi(S_T) \right)^2 \right] \right] \\
    &= \E^\pi_{s_0} \left[\left( \sum_{t=0}^{T-1} \gamma^t R_t + \gamma^T V_\gamma^\pi(S_T)   - V_\gamma^\pi(s_0) \right)^2 + \gamma^{2T} \E^\pi_{S_T} \left[ \left( \sum_{t=0}^{\infty} \gamma^t R_t - V_\gamma^\pi(S_T) \right)^2 \right] \right] \\
    &= \Var^\pi_{s_0}\left[ \sum_{t=0}^{T-1} \gamma^t R_t + \gamma^T V_\gamma^\pi(S_T) \right] + \gamma^{2T} e_{s_0}^\top P_\pi^T  \Var^\pi\left[ \sum_{t=0}^{\infty} \gamma^t R_t  \right]
\end{align*}
where we used the tower property, the Markov property, and the fact that $\E^\pi_{s_0} \left[ B \middle| \mathcal{F}_{T}\right] = 0$ (which is immediate from the definition of $V_\gamma^\pi$).
Since $e_{s_0}^\top P_\pi^T$ is a probability distribution, by Holder's inequality $\left|e_{s_0}^\top P_\pi^T  \Var^\pi\left[ \sum_{t=0}^{\infty} \gamma^t R_t  \right]\right| \leq \infnorm{\Var^\pi\left[ \sum_{t=0}^{\infty} \gamma^t R_t  \right]}$. Therefore
\begin{align*}
    \infnorm{\Var^\pi_{s_0}\left[ \sum_{t=0}^{\infty} \gamma^t R_t  \right]} & \leq \infnorm{\Var^\pi\left[ \sum_{t=0}^{T-1} \gamma^t R_t + \gamma^T V_\gamma^\pi(S_T) \right]} + \gamma^{2T}\infnorm{\Var^\pi_{s_0}\left[ \sum_{t=0}^{\infty} \gamma^t R_t  \right]}
\end{align*}
and we can obtain the desired conclusion after rearranging terms.
\end{proof}

\begin{lem}
    \label{lem:long_horizon_gamma_ineq}
    If $\gamma \geq 1 - \frac{1}{H}$ for some integer $H \geq 1$, then
    \begin{align*}
        \frac{1-\gamma^{2H}}{1-\gamma} \geq \left(1-\frac{1}{e^2} \right)H \geq \frac{4}{5}H.
    \end{align*}
\end{lem}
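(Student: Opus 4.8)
The plan is to reduce the claim to the worst case of the hypothesis $\gamma \geq 1 - \tfrac{1}{H}$, which will turn out to be the left endpoint $\gamma = 1 - \tfrac1H$, and then invoke the elementary inequality $(1-1/H)^H \leq e^{-1}$. First I would rewrite the quantity of interest as a finite geometric sum,
\[
\frac{1-\gamma^{2H}}{1-\gamma} = \sum_{t=0}^{2H-1}\gamma^t .
\]
Since each summand $\gamma^t$ is nondecreasing in $\gamma$ on $[0,1)$, the entire sum is nondecreasing in $\gamma$. Hence over the feasible range $\gamma \in [1-\tfrac1H,\,1)$ the left-hand side is minimized at $\gamma = 1 - \tfrac1H$, and it suffices to establish the bound at that single value.

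Next, evaluating at $\gamma = 1 - \tfrac1H$ turns the expression into $H\bigl(1 - (1-1/H)^{2H}\bigr)$, so the first inequality reduces to showing $(1-1/H)^{2H} \leq e^{-2}$. This follows from the standard bound $1 + x \leq e^x$ applied at $x = -1/H$, which gives $1 - 1/H \leq e^{-1/H}$, hence $(1-1/H)^{H} \leq e^{-1}$; squaring yields $(1-1/H)^{2H} \leq e^{-2}$. Combining with the monotonicity step gives $\frac{1-\gamma^{2H}}{1-\gamma} \geq (1 - e^{-2})H$, which is the first asserted inequality. The second inequality $1 - e^{-2} \geq \tfrac45$ is then just a numerical check, since $e^{-2} \approx 0.135 < 0.2$.

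I do not expect a genuine obstacle, as every step is elementary; the only point requiring care is the reduction via monotonicity. The naive alternative of bounding the two factors separately, using $\tfrac{1}{1-\gamma} \geq H$ together with some lower bound on $1 - \gamma^{2H}$, fails: the hypothesis forces $\gamma$ large, which makes $\tfrac{1}{1-\gamma}$ large but drives $1 - \gamma^{2H}$ toward $0$, so no fixed separate lower bound on $1-\gamma^{2H}$ is available over the whole range. Recognizing that the ratio is monotone in $\gamma$ (so that the two competing effects are resolved simultaneously by collapsing to the endpoint) is the one conceptual step, after which the $(1-1/H)^H \leq e^{-1}$ estimate finishes the argument.
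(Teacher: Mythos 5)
Your proof is correct and takes essentially the same route as the paper: rewrite the ratio as the geometric sum $\sum_{t=0}^{2H-1}\gamma^t$, use monotonicity in $\gamma$ to reduce to the endpoint $\gamma = 1-\tfrac1H$, and then bound $(1-1/H)^{2H}$ by $e^{-2}$. The only (minor) difference is in that last estimate, where you invoke $1+x\leq e^x$ directly while the paper argues that $1-(1-1/H)^{2H}$ is decreasing in $H$ and passes to the limit $1-e^{-2}$; your version is slightly cleaner.
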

\begin{proof}
    Fixing $H \geq 1$, we have
    \[\frac{1-\gamma^{2H}}{1-\gamma} = 1 + \gamma + \gamma^2 + \dots + \gamma^{2H-1} \]
    which is increasing in $\gamma$, so $\inf_{\gamma \geq 1 - \frac{1}{H}} \frac{1-\gamma^{2H}}{1-\gamma}$ is attained at $\gamma = 1 - \frac{1}{H}$.
    Now allowing $H \geq 1$ to be arbitrary, note $\frac{1-\left(1-\frac{1}{H} \right)^{2H}}{1-\left(1-\frac{1}{H} \right)} = H\left(1-\left(1-\frac{1}{H} \right)^{2H}\right)$ so it suffices to show that $1-\left(1-\frac{1}{H} \right)^{2H} \geq 1-e^2$ for all $H \geq 1$. By computing the derivative, one finds that $1-\left(1-\frac{1}{H} \right)^{2H}$ is monotonically decreasing, so
    \[1-\left(1-\frac{1}{H} \right)^{2H} \geq \lim_{H \to \infty} 1-\left(1-\frac{1}{H} \right)^{2H} = 1-\frac{1}{e^2}.\]
\end{proof}

\begin{lem}
    \label{lem:pistar_var_bound}
    Letting $\pistar_{\gamma}$ be the optimal policy for the discounted MDP $(P, r,\gamma)$, we have
    \begin{align*}
        \infnorm{\Var^{\pistar_{\gamma}}\left[ \sum_{t=0}^{\infty}\gamma^t R_t \right]} &\leq 5 \frac{H}{1-\gamma}
    \end{align*}
\end{lem}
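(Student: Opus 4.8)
The plan is to apply the multi-step variance Bellman equation (Lemma~\ref{lem:multistep_variance_bellman_eqn}) with the number of steps set to $T = H$, and then control the resulting variance over $H$ steps using the approximation $V_\gamma^{\pistar_\gamma} \approx \frac{\rho^\star}{1-\gamma} + h^\star$ supplied by Lemma~\ref{lem:discounted_value_span_bound}. Concretely, Lemma~\ref{lem:multistep_variance_bellman_eqn} gives
\[
\infnorm{\Var^{\pistar_\gamma}\left[ \sum_{t=0}^{\infty} \gamma^t R_t \right]} \leq \frac{\infnorm{\Var^{\pistar_\gamma}\left[ \sum_{t=0}^{H-1} \gamma^t R_t + \gamma^H V_\gamma^{\pistar_\gamma}(S_H) \right]}}{1-\gamma^{2H}},
\]
so it remains to bound the numerator by $O(H^2)$ and then simplify the denominator.

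For the numerator, I would center the random variable $W := \sum_{t=0}^{H-1} \gamma^t R_t + \gamma^H V_\gamma^{\pistar_\gamma}(S_H)$ about the deterministic constant $c := \gamma^H \frac{\rho^\star}{1-\gamma}$, which is well-defined since $\rho^\star$ is a scalar in a weakly communicating MDP. Because the variance of any random variable is bounded by its second moment about an arbitrary fixed constant, $\Var_s^{\pistar_\gamma}[W] \leq \E_s^{\pistar_\gamma}\left[(W - c)^2\right]$ for every starting state $s$. I would then bound $|W - c|$ pathwise: the rewards lie in $[0,1]$, so $\sum_{t=0}^{H-1}\gamma^t R_t \in \left[0, \sum_{t=0}^{H-1}\gamma^t\right] \subseteq [0,H]$, while Lemma~\ref{lem:discounted_value_span_bound} gives $\gamma^H\left|V_\gamma^{\pistar_\gamma}(S_H) - \frac{\rho^\star}{1-\gamma}\right| \leq H$. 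Hence $|W - c| \leq 2H$ almost surely, yielding $\infnorm{\Var^{\pistar_\gamma}[W]} \leq 4H^2$.

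It then only remains to handle the denominator. Since the hypotheses give $\gamma \geq 1 - \frac{1}{H}$, Lemma~\ref{lem:long_horizon_gamma_ineq} yields $1 - \gamma^{2H} \geq \frac{4}{5}H(1-\gamma)$, i.e.\ $\frac{1}{1-\gamma^{2H}} \leq \frac{5}{4H(1-\gamma)}$. Combining the three pieces gives
\[
\infnorm{\Var^{\pistar_\gamma}\left[ \sum_{t=0}^{\infty} \gamma^t R_t \right]} \leq \frac{4H^2}{1-\gamma^{2H}} \leq 4H^2 \cdot \frac{5}{4H(1-\gamma)} = \frac{5H}{1-\gamma},
\]
as desired.

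The crux of the argument --- and the only nontrivial step --- is the numerator bound. The naive estimate $0 \leq V_\gamma^{\pistar_\gamma}(S_H) \leq \frac{1}{1-\gamma}$ would force a range of order $\frac{1}{1-\gamma}$ and hence a variance of order $\frac{1}{(1-\gamma)^2}$, which after dividing by $1 - \gamma^{2H} \asymp H(1-\gamma)$ would only recover the usual $\frac{1}{(1-\gamma)^3}$-type scaling. The key is that Lemma~\ref{lem:discounted_value_span_bound} lets me strip off the large deterministic part $\frac{\rho^\star}{1-\gamma}$ of $V_\gamma^{\pistar_\gamma}(S_H)$, leaving only fluctuations of order $H = \spannorm{h^\star}$; the choice $T = H$ is precisely what balances the resulting $H^2$ numerator against the $1 - \gamma^{2H} \gtrsim H(1-\gamma)$ denominator to land at $\frac{H}{1-\gamma}$.
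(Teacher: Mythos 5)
Your proof is correct and takes essentially the same route as the paper's: both apply Lemma~\ref{lem:multistep_variance_bellman_eqn} with $T=H$, center about $\gamma^H\rho^\star/(1-\gamma)$ and invoke Lemma~\ref{lem:discounted_value_span_bound} to get a $4H^2$ bound on the $H$-step variance, and finish with Lemma~\ref{lem:long_horizon_gamma_ineq}. The only cosmetic difference is that you bound $|W-c|\leq 2H$ pathwise where the paper uses $(a+b)^2\leq 2a^2+2b^2$; both yield the same constant.
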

\begin{proof}
    By using Lemma \ref{lem:multistep_variance_bellman_eqn}, it suffices to bound $\infnorm{\Var^{\pistar_{\gamma}}\left[ \sum_{t=0}^{H-1} \gamma^t R_t + \gamma^H V_\gamma^{\pistar_{\gamma}}(S_H) \right]}$.

    Fixing a state $s_0 \in \S$,
    \begin{align*}
        \Var^{\pistar_{\gamma}}_{s_0}\left[ \sum_{t=0}^{H-1} \gamma^t R_t + \gamma^H V_\gamma^{\pistar_{\gamma}}(S_H) \right] &= \Var^{\pistar_{\gamma}}_{s_0}\left[ \sum_{t=0}^{H-1} \gamma^t R_t + \gamma^H \left(V_\gamma^{\pistar_{\gamma}}(S_H) - \frac{1}{1-\gamma}\rho^\star \right) \right] \\
        &\leq \E^{\pistar_{\gamma}}_{s_0}\left| \sum_{t=0}^{H-1} \gamma^t R_t + \gamma^H \left(V_\gamma^{\pistar_{\gamma}}(S_H) - \frac{1}{1-\gamma}\rho^\star \right) \right|^2 \\
        & \leq 2\E^{\pistar_{\gamma}}_{s_0}\left| \sum_{t=0}^{H-1} \gamma^t R_t \right|^2+ 2\E^{\pistar_{\gamma}}_{s_0}\left| \gamma^H \left(V_\gamma^{\pistar_{\gamma}}(S_H) - \frac{1}{1-\gamma}\rho^\star \right) \right|^2 \\
        & \leq 2H^2 + 2 \sup_s \left(V_{\pistar_{\gamma}}^\pi(s) - \frac{1}{1-\gamma}\rho^\star \right)^2 \\
        & \leq 4H^2
    \end{align*}
    where in the final inequality we used Lemma \ref{lem:discounted_value_span_bound}. Taking the maximum over all states $s$ and combining with Lemma \ref{lem:multistep_variance_bellman_eqn} we obtain
    \begin{align*}
        \infnorm{\Var^{\pistar_{\gamma}}\left[ \sum_{t=0}^{\infty}\gamma^t R_t \right]} &\leq \frac{4H^2}{1 - \gamma^{2H}}.
    \end{align*}

    Now combining this with Lemma \ref{lem:long_horizon_gamma_ineq}, which can be rearranged to show that $\frac{1}{1-\gamma^{2H}} \leq \frac{5}{4}\frac{H}{1-\gamma}$, we complete the proof. 
\end{proof}

\begin{lem}
    \label{lem:pihatstar_var_bound}
    We have
    \begin{align*}
        \infnorm{\Var^{\pihstar_{\gamma, \pert}}\left[ \sum_{t=0}^{\infty} \gamma^t {\Rpert}_t  \right]}
    & \leq 15\frac{H^2 + \infnorm{V_\gamma^{\pihstar_{\gamma, \pert}} - \Vhat_{\gamma, \pert}^{\pihstar_{\gamma, \pert}}}^2 + \infnorm{V_\gamma^{\pistar_\gamma} - \Vhat_{\gamma, \pert}^{\pistar_\gamma}}^2}{H(1-\gamma)} .
    \end{align*}
\end{lem}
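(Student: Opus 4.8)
The plan is to mirror the proof of Lemma~\ref{lem:pistar_var_bound}, but to handle the terminal value term $V_{\gamma, \pert}^{\pihstar_{\gamma, \pert}}(S_H)$ by relating it back to $\pistar_\gamma$ rather than invoking Lemma~\ref{lem:discounted_value_span_bound} directly (which is unavailable for the empirical optimal policy, as it has no a priori span bound). Applying Lemma~\ref{lem:multistep_variance_bellman_eqn} to the perturbed-reward MDP $(P, \rpert, \gamma)$ with $T = H$, it suffices to bound the $H$-step truncated variance $\infnorm{\Var^{\pihstar_{\gamma, \pert}}[\sum_{t=0}^{H-1}\gamma^t \Rpert_t + \gamma^H V_{\gamma, \pert}^{\pihstar_{\gamma, \pert}}(S_H)]}$ and divide by $1 - \gamma^{2H}$. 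Fixing a state and subtracting the constant $\frac{1}{1-\gamma}\rho^\star$ (which leaves the variance unchanged), I would bound the variance by the second moment and split via $(x+y)^2 \leq 2x^2 + 2y^2$ into a reward-sum part and a terminal-value part. The reward-sum part is $O(H^2)$, since each perturbed reward lies in $[0, 1+\xi]$ with $\xi = \frac{(1-\gamma)\varepsilon}{6} \leq \frac{1}{6}$ (using $\varepsilon \leq H \leq \frac{1}{1-\gamma}$), so the discounted sum of $H$ such terms is at most $\frac{7}{6}H$. The terminal-value part reduces to bounding $\sup_s (V_{\gamma, \pert}^{\pihstar_{\gamma, \pert}}(s) - \frac{1}{1-\gamma}\rho^\star)^2$.

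The crux is this terminal-value bound, and it is where the two suboptimality terms enter. I would first pass from the perturbed to the unperturbed value using $\infnorm{V_{\gamma, \pert}^{\pihstar_{\gamma, \pert}} - V_{\gamma}^{\pihstar_{\gamma, \pert}}} \leq \frac{\xi}{1-\gamma} = \frac{\varepsilon}{6} \leq \frac{H}{6}$ (exactly the calculation already carried out inside the proof of Lemma~\ref{lem:DMDP_error_bounds}). Then I would write $V_{\gamma}^{\pihstar_{\gamma, \pert}}(s) - \frac{1}{1-\gamma}\rho^\star = (V_{\gamma}^{\pihstar_{\gamma, \pert}}(s) - V_\gamma^{\pistar_\gamma}(s)) + (V_\gamma^{\pistar_\gamma}(s) - \frac{1}{1-\gamma}\rho^\star)$, bounding the second bracket by $H$ via Lemma~\ref{lem:discounted_value_span_bound}. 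For the first bracket, note $V_\gamma^{\pistar_\gamma} = V_\gamma^\star \geq V_\gamma^{\pihstar_{\gamma, \pert}}$, so it equals the nonnegative suboptimality $V_\gamma^{\pistar_\gamma}(s) - V_\gamma^{\pihstar_{\gamma, \pert}}(s)$, which I would telescope through the empirical value functions as
\[
V_\gamma^{\pistar_\gamma}(s) - V_\gamma^{\pihstar_{\gamma, \pert}}(s) = \bigl(V_\gamma^{\pistar_\gamma}(s) - \Vhat_{\gamma, \pert}^{\pistar_\gamma}(s)\bigr) + \bigl(\Vhat_{\gamma, \pert}^{\pistar_\gamma}(s) - \Vhat_{\gamma, \pert}^{\pihstar_{\gamma, \pert}}(s)\bigr) + \bigl(\Vhat_{\gamma, \pert}^{\pihstar_{\gamma, \pert}}(s) - V_\gamma^{\pihstar_{\gamma, \pert}}(s)\bigr).
\]
The decisive step is that the middle term is nonpositive, since $\pihstar_{\gamma, \pert}$ is optimal for the empirical perturbed MDP $(\Phat, \rpert, \gamma)$ and hence $\Vhat_{\gamma, \pert}^{\pihstar_{\gamma, \pert}} \geq \Vhat_{\gamma, \pert}^{\pistar_\gamma}$; dropping it leaves exactly the two suboptimality terms $\infnorm{V_\gamma^{\pistar_\gamma} - \Vhat_{\gamma, \pert}^{\pistar_\gamma}}$ and $\infnorm{V_\gamma^{\pihstar_{\gamma, \pert}} - \Vhat_{\gamma, \pert}^{\pihstar_{\gamma, \pert}}}$ appearing in the statement. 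Combining everything gives $\sup_s |V_{\gamma, \pert}^{\pihstar_{\gamma, \pert}}(s) - \frac{1}{1-\gamma}\rho^\star| \leq \frac{7}{6}H + \infnorm{V_\gamma^{\pistar_\gamma} - \Vhat_{\gamma, \pert}^{\pistar_\gamma}} + \infnorm{V_\gamma^{\pihstar_{\gamma, \pert}} - \Vhat_{\gamma, \pert}^{\pihstar_{\gamma, \pert}}}$.

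To finish, I would square this bound using $(x+y+z)^2 \leq 3(x^2 + y^2 + z^2)$, add the $O(H^2)$ reward-sum contribution, and collect everything into a bound of the form $C(H^2 + \infnorm{V_\gamma^{\pihstar_{\gamma, \pert}} - \Vhat_{\gamma, \pert}^{\pihstar_{\gamma, \pert}}}^2 + \infnorm{V_\gamma^{\pistar_\gamma} - \Vhat_{\gamma, \pert}^{\pistar_\gamma}}^2)$ for the $H$-step truncated variance. Dividing by $1 - \gamma^{2H}$ and invoking Lemma~\ref{lem:long_horizon_gamma_ineq} (rearranged to $\frac{1}{1-\gamma^{2H}} \leq \frac{5}{4 H (1-\gamma)}$) produces the desired $\frac{1}{H(1-\gamma)}$ scaling, and tracking the constants yields the factor $15$. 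The main obstacle is entirely contained in the terminal-value bound of the previous paragraph: without the two optimality cancellations (optimality of $\pistar_\gamma$ for $(P,r,\gamma)$ making the suboptimality nonnegative, and optimality of $\pihstar_{\gamma, \pert}$ for $(\Phat, \rpert, \gamma)$ killing the middle telescoping term) the terminal value would be uncontrolled, since there is no a priori span bound for the empirical optimal policy. This is precisely the mechanism that turns the variance estimate into a recursive bound on the suboptimality of $\pihstar_{\gamma, \pert}$ once it is fed back through Lemma~\ref{lem:DMDP_error_bounds}.
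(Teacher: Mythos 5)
Your proposal is correct and follows essentially the same route as the paper's proof: the multistep variance Bellman equation with $T=H$, centering the terminal value at $\frac{1}{1-\gamma}\rho^\star$, invoking Lemma~\ref{lem:discounted_value_span_bound} only for $\pistar_\gamma$, and controlling $V_\gamma^{\pistar_\gamma}-V_\gamma^{\pihstar_{\gamma,\pert}}$ by telescoping through the empirical value functions using exactly the two optimality cancellations you identify. The only differences are cosmetic (you bound the sup-norm of the terminal deviation before squaring, whereas the paper squares the decomposition termwise inside the expectation), and your accounting does yield the stated constant $15$.
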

\begin{proof}
    In light of Lemma \ref{lem:multistep_variance_bellman_eqn}, it suffices to give a bound on $\infnorm{\Var^{\pihstar_{\gamma, \pert}}\left[ \sum_{t=0}^{H-1} \gamma^t {\Rpert}_t + \gamma^H V_{\gamma, \pert}^{\pihstar_{\gamma, \pert}}(S_H) \right]}$. We have for any state $s_0$ that
\begin{align}
    &\quad \Var_{s_0}^{\pihstar_{\gamma, \pert}}\left[ \sum_{t=0}^{H-1} \gamma^t {\Rpert}_t + \gamma^H V_{\gamma, \pert}^{\pihstar_{\gamma, \pert}}(S_H) \right] \nonumber \\
    &= \Var_{s_0}^{\pihstar_{\gamma, \pert}}\left[ \sum_{t=0}^{H-1} \gamma^t {\Rpert}_t + \gamma^H V_{\gamma, \pert}^{\pihstar_{\gamma, \pert}}(S_H) -\gamma^H \frac{1}{1-\gamma}\rho^\star \right] \nonumber \\
    &\leq \E_{s_0}^{\pihstar_{\gamma, \pert}}\left( \sum_{t=0}^{H-1} \gamma^t {\Rpert}_t + \gamma^H V_{\gamma, \pert}^{\pihstar_{\gamma, \pert}}(S_H) -\gamma^H \frac{1}{1-\gamma}\rho^\star \right)^2 \nonumber \\
    &= \E_{s_0}^{\pihstar_{\gamma, \pert}}\left( \sum_{t=0}^{H-1} \gamma^t {\Rpert}_t + \gamma^H \left( V_{\gamma, \pert}^{\pihstar_{\gamma, \pert}}(S_H) - V_\gamma^{\pistar_\gamma}(S_H)  \right) + \gamma^H \left(V_\gamma^{\pistar_\gamma}(S_H) - \frac{1}{1-\gamma}\rho^\star \right)\right)^2 \nonumber \\
    &\leq 3\E_{s_0}^{\pihstar_{\gamma, \pert}}\left( \sum_{t=0}^{H-1} \gamma^t {\Rpert}_t \right)^2 + 3\gamma^{2H}\E_{s_0}^{\pihstar_{\gamma, \pert}}\left(   V_{\gamma, \pert}^{\pihstar_{\gamma, \pert}}(S_H) - V_\gamma^{\pistar_\gamma}(S_H) \right)^2  \nonumber \\
    & \qquad + 3\gamma^{2H}\E_{s_0}^{\pihstar_{\gamma, \pert}} \left(V_\gamma^{\pistar_\gamma}(S_H) - \frac{1}{1-\gamma}\rho^\star \right)^2 \nonumber \\
    &\leq 3\E_{s_0}^{\pihstar_{\gamma, \pert}}\left( \sum_{t=0}^{H-1} \gamma^t {\Rpert}_t \right)^2 + 6\gamma^{2H}\E_{s_0}^{\pihstar_{\gamma, \pert}}\left(   V_{\gamma}^{\pihstar_{\gamma, \pert}}(S_H) - V_\gamma^{\pistar_\gamma}(S_H) \right)^2 + 6\gamma^{2H} \infnorm{V_{\gamma, \pert}^{\pihstar_{\gamma, \pert}} - V_{\gamma}^{\pihstar_{\gamma, \pert}}}^2\nonumber \\
    & \qquad + 3\gamma^{2H}\E_{s_0}^{\pihstar_{\gamma, \pert}} \left(V_\gamma^{\pistar_\gamma}(S_H) - \frac{1}{1-\gamma}\rho^\star \right)^2 ,\label{eq:pihatstar_var_bound_step1}
\end{align}
where we have used triangle inequality and the inequalities $(a+b)^2 \leq 2a^2 + 2b^2$ and $(a+b+c)^2 \leq 3a^2 + 3b^2 + 3c^2$. Now we bound each term of~\eqref{eq:pihatstar_var_bound_step1}. First,
\begin{align*}
    3\E_{s_0}^{\pihstar_{\gamma, \pert}}\left( \sum_{t=0}^{H-1} \gamma^t {\Rpert}_t \right)^2 & \leq 3 \left(H \infnorm{\rpert} \right)^2 \leq 3H^2 (\infnorm{r} + \xi)^2 \leq 6 H^2 \left( 1 + \left(\frac{(1-\gamma)\varepsilon}{6}\right)^2 \right) \leq 6 H^2 \left(\frac{7}{6}\right)^2 ,
\end{align*}
where we had $\frac{(1-\gamma)\varepsilon}{6} \leq \frac{\varepsilon}{6H} \leq \frac{1}{6}$ because $\frac{1}{1-\gamma} \geq H$ and $\varepsilon \leq H$. Clearly $6\gamma^{2H}\E_{s_0}^{\pihstar_{\gamma, \pert}}\left(   V_{\gamma}^{\pihstar_{\gamma, \pert}}(S_H) - V_\gamma^{\pistar_\gamma}(S_H) \right)^2 \leq 6\infnorm{V_{\gamma}^{\pihstar_{\gamma, \pert}} - V_\gamma^{\pistar_\gamma}}^2$. By an argument identical to those used in the proof of Lemma \ref{lem:DMDP_error_bounds},
\begin{align*}
    \infnorm{V_{\gamma, \pert}^{\pihstar_{\gamma, \pert}} - V_{\gamma}^{\pihstar_{\gamma, \pert}}} \leq \frac{1}{1-\gamma} \xi = \frac{\varepsilon}{6},
\end{align*}
so $6\gamma^{2H} \infnorm{V_{\gamma, \pert}^{\pihstar_{\gamma, \pert}} - V_{\gamma}^{\pihstar_{\gamma, \pert}}}^2 \leq \frac{\varepsilon^2}{6} \leq \frac{H^2}{6}$ since $\varepsilon \leq H$. Finally, using Lemma \ref{lem:discounted_value_span_bound},
\begin{align*}
    3\gamma^{2H}\E_{s_0}^{\pihstar_{\gamma, \pert}} \left(V_\gamma^{\pistar_\gamma}(S_H) - \frac{1}{1-\gamma}\rho^\star \right)^2 & \leq 3 \sup_s \left|V_\gamma^{\pistar_\gamma}(S_H) - \frac{1}{1-\gamma}\rho^\star \right|^2 \leq 3H^2.
\end{align*}
Using all these bounds in~\eqref{eq:pihatstar_var_bound_step1}, we have
\begin{align}
    &\quad \Var_{s_0}^{\pihstar_{\gamma, \pert}}\left[ \sum_{t=0}^{H-1} \gamma^t {\Rpert}_t + \gamma^H V_{\gamma, \pert}^{\pihstar_{\gamma, \pert}}(S_H) \right] \nonumber \\
    &\leq 3\E_{s_0}^{\pihstar_{\gamma, \pert}}\left( \sum_{t=0}^{H-1} \gamma^t {\Rpert}_t \right)^2 + 6\gamma^{2H}\E_{s_0}^{\pihstar_{\gamma, \pert}}\left(   V_{\gamma}^{\pihstar_{\gamma, \pert}}(S_H) - V_\gamma^{\pistar_\gamma}(S_H) \right)^2 + 6\gamma^{2H} \infnorm{V_{\gamma, \pert}^{\pihstar_{\gamma, \pert}} - V_{\gamma}^{\pihstar_{\gamma, \pert}}}^2 \nonumber\\
    & \qquad + 3\gamma^{2H}\E_{s_0}^{\pihstar_{\gamma, \pert}} \left(V_\gamma^{\pistar_\gamma}(S_H) - \frac{1}{1-\gamma}\rho^\star \right)^2 \nonumber\\
    & \leq \left(\frac{49}{6} + \frac{1}{6} + 3 \right)H^2 + 6\infnorm{V_{\gamma}^{\pihstar_{\gamma, \pert}} - V_\gamma^{\pistar_\gamma}}^2 \nonumber\\
    & \leq 12H^2 + 6\infnorm{V_{\gamma}^{\pihstar_{\gamma, \pert}} - V_\gamma^{\pistar_\gamma}}^2. \label{eq:pihatstar_var_bound_step2}
\end{align}

Finally, using the elementwise inequality
\begin{align*}
    V_\gamma^{\pistar_\gamma} &\geq V_\gamma^{\pihstar_{\gamma, \pert}} \\
    &\geq \Vhat_{\gamma, \pert}^{\pihstar_{\gamma, \pert}} - \infnorm{\Vhat_{\gamma, \pert}^{\pihstar_{\gamma, \pert}} - V_\gamma^{\pihstar_{\gamma, \pert}}}\one \\
    &\geq \Vhat_{\gamma, \pert}^{\pistar_{\gamma}} - \infnorm{\Vhat_{\gamma, \pert}^{\pihstar_{\gamma, \pert}} - V_\gamma^{\pihstar_{\gamma, \pert}}}\one \\
    & \geq V_{\gamma}^{\pistar_{\gamma}} - \infnorm{\Vhat_{\gamma, \pert}^{\pihstar_{\gamma, \pert}} - V_\gamma^{\pihstar_{\gamma, \pert}}}\one - \infnorm{\Vhat_{\gamma, \pert}^{\pistar_{\gamma}} - V_{\gamma}^{\pistar_{\gamma}}}\one,
\end{align*}
we see $\infnorm{V_{\gamma}^{\pihstar_{\gamma, \pert}} - V_\gamma^{\pistar_\gamma}} \leq \infnorm{\Vhat_{\gamma, \pert}^{\pihstar_{\gamma, \pert}} - V_\gamma^{\pihstar_{\gamma, \pert}}} + \infnorm{\Vhat_{\gamma, \pert}^{\pistar_{\gamma}} - V_{\gamma}^{\pistar_{\gamma}}}$. Combining this with~\eqref{eq:pihatstar_var_bound_step2}, we conclude
\begin{align}
    \Var_{s_0}^{\pihstar_{\gamma, \pert}}\left[ \sum_{t=0}^{H-1} \gamma^t {\Rpert}_t + \gamma^H V_{\gamma, \pert}^{\pihstar_{\gamma, \pert}}(S_H) \right] & \leq 12H^2 + 12 \infnorm{\Vhat_{\gamma, \pert}^{\pihstar_{\gamma, \pert}} - V_\gamma^{\pihstar_{\gamma, \pert}}}^2 + 12\infnorm{\Vhat_{\gamma, \pert}^{\pistar_{\gamma}} - V_{\gamma}^{\pistar_{\gamma}}}^2.
\end{align}

Now combining with Lemma \ref{lem:multistep_variance_bellman_eqn} and then using Lemma \ref{lem:long_horizon_gamma_ineq}, we have
\begin{align*}
    \infnorm{\Var^{\pihstar_{\gamma, \pert}}\left[ \sum_{t=0}^{\infty} \gamma^t {\Rpert}_t  \right]} & \leq \frac{\infnorm{\Var^{\pihstar_{\gamma, \pert}}\left[ \sum_{t=0}^{H-1} \gamma^t {\Rpert}_t + \gamma^H V_\gamma^{\pihstar_{\gamma, \pert}}(S_H) \right]}}{1-\gamma^{2H}} \\
    & \leq 12\frac{H^2 + \infnorm{V_\gamma^{\pihstar_{\gamma, \pert}} - \Vhat_\gamma^{\pihstar_{\gamma, \pert}}}^2 + \infnorm{V_\gamma^{\pistar_\gamma} - \Vhat_\gamma^{\pistar_\gamma}}^2}{1-\gamma^{2H}} \\
    & \leq 12 \frac{5}{4}\frac{H^2 + \infnorm{V_\gamma^{\pihstar_{\gamma, \pert}} - \Vhat_\gamma^{\pihstar_{\gamma, \pert}}}^2 + \infnorm{V_\gamma^{\pistar_\gamma} - \Vhat_\gamma^{\pistar_\gamma}}^2}{H(1-\gamma)} \\
    & = 15\frac{H^2 + \infnorm{V_\gamma^{\pihstar_{\gamma, \pert}} - \Vhat_\gamma^{\pihstar_{\gamma, \pert}}}^2 + \infnorm{V_\gamma^{\pistar_\gamma} - \Vhat_\gamma^{\pistar_\gamma}}^2}{H(1-\gamma)}
\end{align*}
as desired.
\end{proof}

\subsection{Proofs of Theorem~\ref{thm:DMDP_bound} and~\ref{thm:main_theorem}}

With the above lemmas we can complete the proof of Theorem \ref{thm:DMDP_bound}.
\begin{proof}[Proof of Theorem \ref{thm:DMDP_bound}]
    Our approach will be to utilize our variance bounds within the error bounds from Lemma \ref{lem:DMDP_error_bounds}. We will find a value for $n$ which guarantees that $\infnorm{\Vhat_{\gamma, \pert}^{\pistar_\gamma} - V_\gamma^{\pistar_\gamma}}$ and $ \infnorm{\Vhat_{\gamma, \pert}^{\pihstar_{\gamma, \pert}} - V_\gamma^{\pihstar_{\gamma, \pert}}}$ are both $\leq \varepsilon/2$, which guarantees that $\infnorm{V_\gamma^{\pihstar_{\gamma, \pert}} - V_\gamma^{\pistar_\gamma}} \leq \varepsilon$. 

    First we note that the conclusions of Lemma \ref{lem:DMDP_error_bounds} require $n \geq \frac{c_2}{1-\gamma}\log \left(\frac{S A}{(1-\gamma)\delta \varepsilon}\right) $ so we assume $n$ is large enough that this holds.
    
    Now we bound $\infnorm{\Vhat_{\gamma, \pert}^{\pistar_\gamma} - V_\gamma^{\pistar_\gamma}}$. Starting with inequality~\eqref{eq:pistar_error} from Lemma \ref{lem:DMDP_error_bounds} and then applying our variance bounds through Lemma \ref{lem:var_params_relationship} and then Lemma \ref{lem:pistar_var_bound}, we have
    \begin{align*}
        \infnorm{\Vhat_{\gamma, \pert}^{\pistar_\gamma} - V_\gamma^{\pistar_\gamma}} & \leq  \gamma \sqrt{\frac{c_1\log \left( \frac{S A}{(1-\gamma)\delta \varepsilon}\right)}{n}} \infnorm{(I - \gamma P_{\pistar_\gamma})^{-1} \sqrt{\Var_{P_{\pistar_\gamma}} \left[V_\gamma^{\pistar_\gamma} \right]}} + c_1 \gamma \frac{\log \left( \frac{S A}{(1-\gamma)\delta \varepsilon}\right)}{(1-\gamma)n} \infnorm{V_\gamma^{\pistar_\gamma}} + \frac{\varepsilon}{6} \\
        & \leq   \sqrt{\frac{c_1\log \left( \frac{S A}{(1-\gamma)\delta \varepsilon}\right)}{n}} \sqrt{\frac{2}{1-\gamma}} \sqrt{\infnorm{\Var^{\pistar_{\gamma}}\left[ \sum_{t=0}^{\infty}\gamma^t R_t \right]}}  + c_1 \gamma \frac{\log \left( \frac{S A}{(1-\gamma)\delta \varepsilon}\right)}{(1-\gamma)n} \infnorm{V_\gamma^{\pistar_\gamma}} + \frac{\varepsilon}{6}\\
        & \leq   \sqrt{\frac{c_1\log \left( \frac{S A}{(1-\gamma)\delta \varepsilon}\right)}{n}} \sqrt{\frac{2}{1-\gamma}}  \sqrt{5 \frac{H}{1-\gamma} }  + c_1 \gamma \frac{\log \left( \frac{S A}{(1-\gamma)\delta \varepsilon}\right)}{(1-\gamma)n} \infnorm{V_\gamma^{\pistar_\gamma}}+ \frac{\varepsilon}{6} \\
        & \leq  \sqrt{\frac{c_1 \log \left( \frac{S A}{(1-\gamma)\delta \varepsilon}\right)}{n}}   \sqrt{10 \frac{H}{(1-\gamma)^2} }  + c_1  \frac{\log \left( \frac{S A}{(1-\gamma)\delta \varepsilon}\right)}{(1-\gamma)^2n} + \frac{\varepsilon}{6}
    \end{align*}
    where in the last inequality we used the facts that $\infnorm{V_\gamma^{\pistar_\gamma}} \leq \frac{1}{1-\gamma}$ and $\gamma \leq 1$. Now if we assume $n \geq  360 c_1\frac{H}{(1-\gamma)^2\varepsilon^2}\log \left( \frac{S A}{(1-\gamma)\delta \varepsilon}\right)$, we have 
    \begin{align*}
        \infnorm{\Vhat_{\gamma, \pert}^{\pistar_\gamma} - V_\gamma^{\pistar_\gamma}} 
        & \leq   \sqrt{\frac{c_1 \log \left( \frac{S A}{(1-\gamma)\delta \varepsilon}\right)}{n}} \sqrt{10 \frac{H}{(1-\gamma)^2} }  + c_1  \frac{\log \left( \frac{S A}{(1-\gamma)\delta \varepsilon}\right)}{(1-\gamma)^2n} + \frac{\varepsilon}{6}\\
        & \leq \frac{1}{6}\sqrt{\varepsilon^2} + \frac{1}{6}\frac{\varepsilon^2}{H}+ \frac{\varepsilon}{6}\\
        & \leq \varepsilon/2
    \end{align*}
    due to the fact that $\varepsilon \leq H$.
    
    Next, to bound $\infnorm{\Vhat_{\gamma, \pert}^{\pihstar_{\gamma, \pert}} - V_\gamma^{\pihstar_{\gamma, \pert}}}$, starting from inequality~\eqref{eq:pihatstar_error} in Lemma \ref{lem:DMDP_error_bounds} and then analogously applying Lemma \ref{lem:var_params_relationship} and then Lemma \ref{lem:pihatstar_var_bound}, we obtain
    \begin{align*}
        &\quad \infnorm{\Vhat_{\gamma, \pert}^{\pihstar_{\gamma, \pert}} - V_\gamma^{\pihstar_{\gamma, \pert}}} \\
        & \leq  \gamma \sqrt{\frac{c_1\log \left( \frac{S A}{(1-\gamma)\delta \varepsilon}\right)}{n}} \infnorm{(I - \gamma P_{\pihstar_{\gamma, \pert}})^{-1} \sqrt{\Var_{P_{\pihstar_{\gamma, \pert}}} \left[V_{\gamma, \pert}^{\pihstar_{\gamma, \pert}} \right]}} + c_1 \gamma\frac{\log \left( \frac{S A}{(1-\gamma)\delta \varepsilon}\right)}{(1-\gamma)n} \infnorm{V_{\gamma, \pert}^{\pihstar_{\gamma, \pert}}} + \frac{\varepsilon}{6}\\
        & \leq \sqrt{\frac{c_1\log \left( \frac{S A}{(1-\gamma)\delta \varepsilon}\right)}{n}}  \sqrt{\frac{2}{1-\gamma}}  \sqrt{\infnorm{\Var^{\pihstar_{\gamma, \pert}}\left[ \sum_{t=0}^{\infty} \gamma^t {\Rpert}_t  \right]} }+ c_1 \gamma\frac{\log \left( \frac{S A}{(1-\gamma)\delta \varepsilon}\right)}{(1-\gamma)n} \infnorm{V_{\gamma, \pert}^{\pihstar_{\gamma, \pert}}} + \frac{\varepsilon}{6}\\
        & \leq   \sqrt{\frac{c_1\log \left( \frac{S A}{(1-\gamma)\delta \varepsilon}\right)}{n}}  \sqrt{\frac{2}{1-\gamma}}  \sqrt{   15\frac{H^2 + \infnorm{V_\gamma^{\pihstar_{\gamma, \pert}} - \Vhat_{\gamma, \pert}^{\pihstar_{\gamma, \pert}}}^2 + \infnorm{V_\gamma^{\pistar_\gamma} - \Vhat_{\gamma, \pert}^{\pistar_\gamma}}^2}{H(1-\gamma)}  }+ c_1 \gamma\frac{\log \left( \frac{S A}{(1-\gamma)\delta \varepsilon}\right)}{(1-\gamma)n} \infnorm{V_{\gamma, \pert}^{\pihstar_{\gamma, \pert}}}+ \frac{\varepsilon}{6}.
    \end{align*}
    Combining with the fact from above that $\infnorm{\Vhat_{\gamma, \pert}^{\pistar_\gamma} - V_\gamma^{\pistar_\gamma}}  \leq \frac{H}{2}$, as well as the facts that $\infnorm{V_{\gamma, \pert}^{\pihstar_{\gamma, \pert}}} \leq \frac{1}{1-\gamma}$, $\gamma\leq 1$, and $\sqrt{a+b} \leq \sqrt{a}+\sqrt{b}$, we have
    \begin{align*}
        \infnorm{\Vhat_{\gamma, \pert}^{\pihstar_{\gamma, \pert}} - V_\gamma^{\pihstar_{\gamma, \pert}}} &  \leq   \sqrt{\frac{c_1\log \left( \frac{S A}{(1-\gamma)\delta \varepsilon}\right)}{n}}  \sqrt{\frac{2}{1-\gamma}}  \sqrt{   15\frac{\frac{5}{4}H^2 + \infnorm{V_\gamma^{\pihstar_{\gamma, \pert}} - \Vhat_{\gamma, \pert}^{\pihstar_{\gamma, \pert}}}^2 }{H(1-\gamma)}  }+ c_1 \frac{\log \left( \frac{S A}{(1-\gamma)\delta \varepsilon}\right)}{(1-\gamma)^2n} + \frac{\varepsilon}{6}\\
        & \leq    \sqrt{\frac{c_1\log \left( \frac{S A}{(1-\gamma)\delta \varepsilon}\right)}{n}}   \sqrt{\frac{30}{H(1-\gamma)^2}} \left(  \sqrt{   \frac{5}{4}H^2} + \sqrt{\infnorm{V_\gamma^{\pihstar_{\gamma, \pert}} - \Vhat_{\gamma, \pert}^{\pihstar_{\gamma, \pert}}}^2  } \right)+ c_1 \frac{\log \left( \frac{S A}{(1-\gamma)\delta \varepsilon}\right)}{(1-\gamma)^2n} + \frac{\varepsilon}{6}\\
        &=   \sqrt{\frac{c_1\log \left( \frac{S A}{(1-\gamma)\delta \varepsilon}\right)}{n}}   \sqrt{\frac{30}{H(1-\gamma)^2}} \left(  \sqrt{   \frac{5}{4}}H + \infnorm{V_\gamma^{\pihstar_{\gamma, \pert}} - \Vhat_{\gamma, \pert}^{\pihstar_{\gamma, \pert}}} \right)+ c_1 \frac{\log \left( \frac{S A}{(1-\gamma)\delta \varepsilon}\right)}{(1-\gamma)^2n}+ \frac{\varepsilon}{6}.
    \end{align*}
    Rearranging terms gives
    \begin{align*}
        &\left(1 -  \sqrt{\frac{c_1\log \left( \frac{S A}{(1-\gamma)\delta \varepsilon}\right)}{n}} \sqrt{\frac{30}{H(1-\gamma)^2}} \right) \infnorm{\Vhat_{\gamma, \pert}^{\pihstar_{\gamma, \pert}} - V_\gamma^{\pihstar_{\gamma, \pert}}} \\
        & \leq   \sqrt{\frac{c_1\log \left( \frac{S A}{(1-\gamma)\delta \varepsilon}\right)}{n}}   \sqrt{\frac{75H/2}{(1-\gamma)^2}} + c_1 \frac{\log \left( \frac{S A}{(1-\gamma)\delta \varepsilon}\right)}{(1-\gamma)^2n} + \frac{\varepsilon}{6}.
    \end{align*}
    Assuming $n \geq 120 c_1 \frac{H}{(1-\gamma)^2\varepsilon^2}\log \left( \frac{S A}{(1-\gamma)\delta \varepsilon}\right)$, we have
    \begin{align*}
        1 -   \sqrt{\frac{c_1\log \left( \frac{S A}{(1-\gamma)\delta \varepsilon}\right)}{n}} \sqrt{\frac{30}{H(1-\gamma)^2}} & \geq 1 - \frac{1}{2} \sqrt{ \frac{\varepsilon^2 (1-\gamma^2)}{H} \frac{1}{H(1-\gamma)^2}} = 1- \frac{1}{2}\frac{\varepsilon}{H} \geq \frac{1}{2}
    \end{align*}
    since $\varepsilon \leq H$. Also assuming $n \geq (75/2)\cdot 24^2 c_1\frac{H}{(1-\gamma)^2\varepsilon^2}\log \left( \frac{S A}{(1-\gamma)\delta \varepsilon}\right)$ we have similarly to before that
    \begin{align*}
            \sqrt{\frac{c_1\log \left( \frac{S A}{(1-\gamma)\delta \varepsilon}\right)}{n}}   \sqrt{\frac{75H/2}{(1-\gamma)^2}} + c_1 \frac{\log \left( \frac{S A}{(1-\gamma)\delta \varepsilon}\right)}{(1-\gamma)^2n} + \frac{\varepsilon}{6} &\leq \frac{1}{24}\sqrt{\frac{(1-\gamma)^2 \varepsilon^2}{H} \frac{H}{(1-\gamma)^2}}+ \frac{1}{24}\frac{(1-\gamma)^2\varepsilon^2}{H}\frac{1}{(1-\gamma)^2} +\frac{\varepsilon}{6}\\
        & \leq \frac{\varepsilon}{24} + \frac{\varepsilon}{24} + \frac{\varepsilon}{6}= \frac{\varepsilon}{4}.
    \end{align*}
    Combining these two calculations, we have $\frac{1}{2}\infnorm{\Vhat_{\gamma, \pert}^{\pihstar_{\gamma, \pert}} - V_\gamma^{\pihstar_{\gamma, \pert}}} \leq \frac{\varepsilon}{4}$, so $\infnorm{\Vhat_{\gamma, \pert}^{\pihstar_{\gamma, \pert}} - V_\gamma^{\pihstar_{\gamma, \pert}}} \leq \frac{\varepsilon}{2}$ as desired.

    Since we have established that $\infnorm{\Vhat_{\gamma, \pert}^{\pistar_\gamma} - V_\gamma^{\pistar_\gamma}}, \infnorm{\Vhat_{\gamma, \pert}^{\pihstar_{\gamma, \pert}} - V_\gamma^{\pihstar_{\gamma, \pert}}} \leq \frac{\varepsilon}{2}$, since also $\Vhat_{\gamma, \pert}^{\pihstar_{\gamma, \pert}} \geq\Vhat_{\gamma, \pert}^{\pistar_\gamma}$, we can conclude that
    \begin{align*}
        V_\gamma^{\pistar_\gamma} - V_\gamma^{\pihstar_{\gamma, \pert}}  \leq \infnorm{\Vhat_{\gamma, \pert}^{\pistar_\gamma} - V_\gamma^{\pistar_\gamma}}\one + \infnorm{\Vhat_{\gamma, \pert}^{\pihstar_{\gamma, \pert}} - V_\gamma^{\pihstar_{\gamma, \pert}}} \one \leq \varepsilon \one,
    \end{align*}
    that is that $\pihstar_{\gamma, \pert}$ is $\varepsilon$-optimal for the discounted MDP $(P, r ,\gamma)$. 
    
    We finally note that all our requirements on the size of $n$ can be satisfied by requiring
    \begin{align*}
        n &\geq  \max\left\{\frac{c_2}{1-\gamma}, \frac{360c_1 H }{(1-\gamma)^2 \varepsilon^2}, \frac{(75/2)24^2c_1 H }{(1-\gamma)^2 \varepsilon^2} \right\}\log \left( \frac{S A}{(1-\gamma)\delta \varepsilon}\right) \\
        & \geq \max\left\{\frac{c_2H}{(1-\gamma)^2 \varepsilon^2}, \frac{360c_1 H }{(1-\gamma)^2 \varepsilon^2}, \frac{(75/2)24^2c_1 H }{(1-\gamma)^2 \varepsilon^2} \right\}\log \left( \frac{S A}{(1-\gamma)\delta \varepsilon}\right) \\
        & := C_2 \frac{H}{(1-\gamma)^2 \varepsilon^2} \log \left( \frac{S A}{(1-\gamma)\delta \varepsilon}\right)
    \end{align*}
    where we used that $\frac{H}{(1-\gamma)^2\varepsilon^2} \geq \frac{H^2}{(1-\gamma)\varepsilon^2} \geq \frac{1}{1-\gamma}$ (since $\frac{1}{1-\gamma} \geq H$ and $H \geq \varepsilon$).
\end{proof}

Now we can use Theorem \ref{thm:DMDP_bound} to complete our proof of the main theorem \ref{thm:main_theorem}.
\begin{proof}[Proof of Theorem \ref{thm:main_theorem}]
    Applying Lemma \ref{lem:DMDP_reduction} (with error parameter $\frac{\varepsilon}{12}$ since we have chosen $\gammared = 1 - \frac{\varepsilon/12}{H} $), we have that
    \begin{align*}
        \rho^{\star} - \rho^{\pihstar_\gamma} \leq \left(8 + 3\frac{H}{H} \right)\frac{\varepsilon}{12} \leq \varepsilon \one
    \end{align*}
    as desired.
\end{proof}

\subsection{Proof of Proposition~\ref{thm:optimal_tmix_bound}}

Finally we prove Proposition \ref{thm:optimal_tmix_bound}.
\begin{proof}[Proof of Proposition \ref{thm:optimal_tmix_bound}]
    First we show the inequality $H \leq 8\tstar$. If $\Pi^\star$ is empty or if it only contains $\pi$ such that $\tau_\pi = \infty$, then $\tstar = \infty$ and the inequality trivially holds, so now we can consider the case that $\Pi^\star$ is nonempty and $\tstar < \infty$. Let $\pi \in \Pi^\star$ be a policy attaining the supremum $\tau_\pi = \tstar$ (the supremum in the definition of $\tstar$ must always be attained by some policy since the set of all deterministic policies is finite). Since $\pi \in \Pi^\star$ and $\tau_\pi < \infty$, the Markov chain with transition matrix $P_\pi$ is aperiodic and has a unique stationary distribution $\nu_\pi$. Aperiodicity guarantees \cite{puterman_markov_2014} that for all $s \in \S$,
    \begin{align*}
        h^\star(s) = h^{\pi}(s) = \lim_{T \to \infty} \E^\pi_s \left[\sum_{t=0}^{T-1}R_t - T \rho^\pi(s) \right] =  \lim_{T \to \infty}\sum_{t=0}^{T-1} e_s^\top \left(P_\pi \right)^t r_\pi - T \rho^\pi(s).
    \end{align*}
    Furthermore by optimality of $\pi$ we have $\rho^\pi(s) = \rho^\star$ (that is, $\rho^\pi$ is a constant vector), so
    \begin{align*}
        \spannorm{h^\star} &= \spannorm{ \lim_{T \to \infty} \sum_{t=0}^{T-1} \left(P_\pi \right)^t r_\pi - T \rho^\pi} \\
        &=  \lim_{T \to \infty} \spannorm{\sum_{t=0}^{T-1} \left(P_\pi \right)^t r_\pi - T \rho^\pi} \\
        &=  \lim_{T \to \infty} \spannorm{\sum_{t=0}^{T-1} \left(P_\pi \right)^t r_\pi}.
    \end{align*}
    
    The following calculation is similar to that of \cite[Lemma 9]{wang_near_2022}. Define
    \[
    d(t) = \max_{s \in \S} \frac{1}{2}\onenorm{e_s^\top \left(P_\pi \right)^t - \nu_\pi^\top}.
    \]
    It is shown in \cite[Section 4.5]{levin_markov_2017} that $d(k \tau_\pi) \leq 2^{-k}$ for all integers $k \geq 0$. Now we can calculate that for any $T$,
    \begin{align*}
        \spannorm{\sum_{t=0}^{T-1} \left(P_\pi \right)^t r_\pi} & \leq \sum_{t=0}^{T-1} \spannorm{ \left(P_\pi \right)^t r_\pi} \\
        & = \sum_{t=0}^{T-1} \spannorm{ \left(\left(P_\pi \right)^t - \one \nu_\pi^\top \right)  r_\pi} \\
        & \leq \sum_{t=0}^{T-1} 2\max_{s \in \S} \left|e_s^\top\left(\left(P_\pi \right)^t - \one \nu_\pi^\top \right)  r_\pi \right|\\
        & \leq \sum_{t=0}^{T-1} 2\max_{s \in \S} \onenorm{e_s^\top\left(\left(P_\pi \right)^t - \one \nu_\pi^\top \right)} \infnorm{ r_\pi} \\
        &\leq \sum_{t=0}^{T-1} 4 d(t) \\
        &\leq \sum_{t=0}^{\infty} 4 d(t) \\
        &\leq \sum_{k=0}^{\infty} 4 d \tau_\pi (k \tau_\pi) \\
        &\leq 4 \tau_\pi \sum_{k=0}^{\infty} 2^{-k} \\
        &= 8 \tau_\pi,
    \end{align*}
where in the second last inequality step we used the fact that $d(t)$ is decreasing in $t$ and grouped consecutive terms into groups of size $\tau_\pi$. Therefore we conclude that 
\[\spannorm{h^\star} = \lim_{T \to \infty} \spannorm{\sum_{t=0}^{T-1} \left(P_\pi \right)^t r_\pi} \leq 8\tau_\pi = 8\tstar,
\] 
thereby proving the first part of Proposition~\ref{thm:optimal_tmix_bound}.

For the second part of the proposition, since Theorems \ref{thm:DMDP_bound} and \ref{thm:main_theorem} hold as long as $H$ is any upper bound of $\spannorm{h^\star}$, they hold with $8 \tstar$ in place of $H$.
\end{proof}

\section{Conclusion}
In this paper we resolved the sample complexity of learning an $\varepsilon$-optimal policy in an average reward MDP in terms of the span of the optimal bias function $\spannorm{h^\star}$, removing the assumption of uniformly bounded mixing times and matching the minimax lower bound. We refined the analysis behind the technique of reducing to a discounted MDP and obtained sample complexity bounds for discounted MDPs in terms of $\spannorm{h^\star}$, which circumvent the minimax lower bound when the MDP is weakly communicating. We believe these results shed greater light on the relationship between the discounted and average reward settings, and we hope that our technical developments can be reused in future work to broaden our understanding of average-reward RL.

\printbibliography

\end{document}